\newtheorem{lemma}{Lemma}[section]
\newtheorem{assumption}{Assumption}
\newtheorem{theorem}{Theorem}
\newtheorem{corollary}{Corollary}
\newcommand{\facc}[2]{{#1}{\scriptsize$\pm${#2}}}
\newcommand{\bfacc}[2]{\textbf{{#1}{\scriptsize$\pm${#2}}}}
\newtheorem{definition}{Definition}
\newcommand{\printfnsymbol}[1]{%
  \textsuperscript{\@fnsymbol{#1}}%
}
\definecolor{Gray}{gray}{0.9}
\titlespacing\section{0pt}{0pt plus 2pt minus 2pt}{0pt plus 2pt minus 2pt}
\titlespacing\subsection{0pt}{3pt plus 4pt minus 2pt}{0pt plus 2pt minus 2pt}
\titlespacing\subsubsection{0pt}{3pplus 4pt minus 2pt}{0pt plus 2pt minus 2pt}
\title{Decoupled Self-supervised Learning \\for  Non-Homophilou Graphs}
\author{%
  Teng Xiao$^1$, Zhengyu Chen$^2$, Zhimeng Guo$^1$, Zeyang Zhuang$^{3}$, Suhang Wang$^1$ \\
  $^1$The Pennsylvania State University, $^2$Zhejiang University, $^3$Tongji University\\
  \texttt{\{tengxiao,zhimeng,szw494\}@psu.edu,chenzhengyu@zju.edu.cn}\\
  \texttt{zeyangzhuang0315@gmail.com}
}
\begin{document}
\maketitle

\begin{abstract}
This paper studies the problem of conducting self-supervised learning for node representation learning on  graphs. Most existing self-supervised learning methods assume the graph is homophilous, where linked nodes often belong to the same class or have similar features. However, such assumptions of homophily do not always hold in real-world graphs. We address this problem by developing a decoupled self-supervised learning (DSSL) framework for graph neural networks. DSSL imitates a generative process of nodes and links from latent variable modeling of the semantic structure, which decouples different underlying semantics between different neighborhoods into the self-supervised learning process. Our DSSL framework is agnostic to the encoders and does not need prefabricated augmentations,
thus is flexible to different graphs. To effectively optimize the framework,  we derive the evidence lower bound of the self-supervised objective and develop a scalable training algorithm with variational inference. We provide a theoretical analysis to justify that DSSL enjoys the better downstream performance. Extensive experiments on various types of graph benchmarks demonstrate that our proposed framework can  achieve better performance compared with competitive  baselines.
\end{abstract}

%!TEX root = ./main.tex
\section{Introduction}
Graph-structured data is ubiquitous in the real world, such as social networks, knowledge graphs, and molecular structures. In recent years, graph neural networks (GNNs)~\cite{hamilton2017inductive,DBLP:conf/iclr/KipfW17,DBLP:conf/iclr/VelickovicCCRLB18,chen2022ba,xu2018representation} have been proven to be powerful in node representation learning over graph-structured data. Typically, GNNs are trained with annotated labeled data in a supervised manner. However, collecting labeled data is expensive and impractical in many applications, especially for those requiring domain knowledge, such as medicine and chemistry~\cite{zitnik2018prioritizing,hu2020strategies}. Moreover, supervised learning may suffer from problems of less-transferrable, over-fitting, and poor generalization when the task labels are scarce~\cite{ericsson2021well,you2020graph}.

Recently, self-supervised learning (SSL) provides a promising learning paradigm that reduces the dependence on manual labels in the image domain~\cite{chen2020simple,gidaris2018unsupervised,grill2020bootstrap,chen2021pareto}. Compared to image data, there are unique challenges in designing self-supervised learning schemes for graph-structured data since nodes in the graph are correlated with each other rather than completely independent, and geometric structures are essential and heavily impact the performance in downstream tasks~\cite{DBLP:conf/iclr/KipfW17}. A number of recent works~\cite{velivckovic2018deep,hassani2020contrastive,you2020graph,zhu2021graph,thakoor2021large,zhang2021canonical,suresh2021adversarial} have studied graph self-supervised learning and confirm that it can learn transferrable and generalizable node representations without any labels. Typically, there are two main self-supervised schemes to capture structure information in graphs~\cite{velivckovic2018deep,you2020graph,suresh2021adversarial}. The first scheme is reconstructing the vertex adjacency following traditional network-embedding methods~\cite{kipf2016variational,grover2016node2vec,hamilton2017representation,hamilton2017inductive,tang2015line}, which learns an encoder that imposes the topological closeness of nodes in the graph structure on latent representations. The key assumption behind this scheme is that neighboring nodes have similar representations~\cite{velivckovic2018deep,tang2015line}. However, this assumption over-emphasizes proximity~\cite{velivckovic2018deep,you2020graph,tang2021graph} and does not always hold true for heterophilic and non-homophilous (mixing) graphs. In comparison, contrastive learning methods~\cite{velivckovic2018deep,hassani2020contrastive,you2020graph,zhu2021graph,thakoor2021large,zhang2021canonical} construct two graph views via the stochastic augmentation and then learns representations by contrasting views with information maximization principle. While these contrastive methods can capture structure information without directly emphasizing proximity, their performance relies on topology augmentation~\cite{zhu2021empirical,trivedi2021augmentations}. Importantly, conducting augmentation for non-homophilous  graphs is relatively difficult since linked nodes may be dissimilar, and nodes with high similarities might be farther away from each other. 
% Although there are many efforts~\cite{pei2019geom,zhu2020beyond,lim2021large,zhu2021graph,chien2020adaptive,yang2021diverse} trying to deal with the non-homophilous graph by designing new GNN models, they are still supervised learning approaches.
Hence, the above problems pose an important and challenging research question: \textit{How to design an effective self-supervised scheme for node representation learning in non-homophilous graphs?}

We approach this question by  taking advantage of neighborhood strategies of nodes for the self-supervised learning on non-homophilous graphs. Our key motivation is that nodes with similar neighborhood patterns should have similar representations. In other words, we expect that the neighborhood distributions can be exploited to distinguish node representations. Our assumption is more general than the standard homophily assumption as shown in Figure~\ref{fig:motivation}. For instance, while the gender prediction in common dating networks lacks homophily~\cite{altenburger2018monophily}, neighborhood distribution is very informative to the node gender labels, i.e., nodes with similar neighborhoods are likely to be similar.  

% beneficial , and could hurt structural information
% We approach this question by  investigating whether one can take advantage of feature and neighborhood distribution strategies of nodes to help the self-supervised learning for non-homophilous graphs. Our key motivation is that nodes with similar feature distribution or neighborhood distribution patterns should have similar representations in non-homophilous graphs. As suggested by~\cite{lim2021large,jin2021node,chen2020simple}, maintaining original feature information can benefit the supervised learning on non-homophilous graphs. In addition, we expect that the neighborhood distributions can be exploited to distinguish node representations.  Our assumption is more general than the standard homophily assumption. For instance, while the gender prediction in dating networks lacks of homophily (individual with gender male has a preference
% of making friends with people whose identities are female)~\cite{altenburger2018monophily}, neighborhood distribution are very informative to the node gender labels, i.e., nodes with similar neighborhoods are likely to be similar. Figure~\ref{xx} illustrates this  motivation.   

\begin{figure}[t]
\centering
    \includegraphics[width=0.850\textwidth]{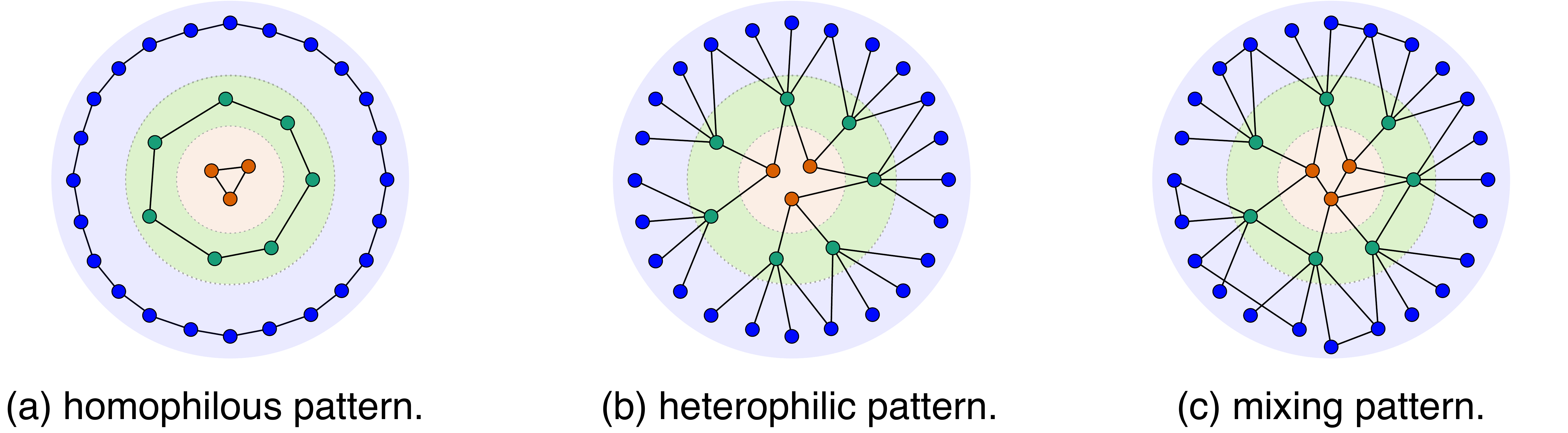}
    \vspace{-0.6em}
     \caption{An illustration examples of different types of graphs. Nodes with similar labels typically have  similar neighborhood patterns in all types of graph. This assumption is general than the standard homophily assumption where nodes with similar semantic label typically be linked with each other.}\label{fig:motivation}
\vspace{-1em}
\end{figure}

While the motivation is straightforward, we are faced with two main challenges. The first challenge is capturing the local neighborhood distribution in a self-supervised learning manner. The neighborhood distributions typically follow heterogeneous and diverse patterns. For instance, a node usually connects with others due to the complex interaction of different latent factors and therefore possesses distribution consisting of local mixing patterns wherein certain parts of the neighborhood are homophilous while others are non-homophilous. The lack of supervision obstructs us from modeling the distribution of neighborhoods. The second challenge is capturing the long-range semantic dependencies. As shown in Figure~\ref{fig:motivation}, in non-homophilous graphs, nodes with high semantic and local structural similarities might be farther away from each other. For this reason, global semantic information is the objective that we would incorporate for self-supervised learning.

This paper presents a new self-supervised framework, decoupled self-supervised learning (DSSL), to achieve a good balance between these two challenges. At the core of DSSL is the latent variables, which empower the model with the flexibility to decouple the heterogeneous and diverse patterns in local neighborhood distributions and capture the global semantic dependencies in a coherent and synergistic framework. Our contributions can be summarized as follows: (1) We propose a  DSSL for performing self-supervised learning on non-homophilous graphs, which can leverage both useful local structure and global semantic information.
(2) We develop an efficient training algorithm based on variational inference to simultaneously infer the latent variables and learn encoder parameters.
(3) We analyze the properties of DSSL and theoretically show that the learned representations can achieve better downstream performance. 
(4) We conduct experiments on real-world non-homophilous graphs, and the results demonstrate the effectiveness of our self-supervised learning framework.

% To tackle the aforementioned challenges, we propose a decoupled self-supervised learning (DSSL) framework  that enables to leverage  both feature and  structural information without manual labels. Specifically, instead of using contrastive learning, DSSL adopts the principle of the probabilistic generative model and  introduces a set of latent variables as latent factors of each relation in the neighborhood distribution. Our probabilistic  formulation  brings some benefits: (1) DSSL admits a generative process, which can effectively leverage the neighborhood structural information to better guide the self-supervised learning. (2) The latent variable  further empowers model the flexibility to decouple the heterogeneous and diverse patterns in neighborhood distributions and capture the long-range dependencies, which allows us to better utilize the multi-granularity information of neighborhood and  preserve underlying semantics.  It’s empirically verified that DSSL can achieve superior performance than current self-supervised learning methods  when modeling non-homophilous graphs. To avoid  the above-mentioned the inductive bias of homophily,  we propose a new architectures of GNN encoders with a decoupled propagation scheme which separately propagates node feature and structural information.  To effectively optimize the encoder and infer latent variables, we derive the evidence lower-bound of the maximum-likelihood estimation.

\section{Related Work}
\textbf{Non-homophilous Graphs.} 
Non-homophilous  is known in many  settings such as online transaction networks~\cite{pandit2007netprobe},  dating networks~\cite{altenburger2018monophily} and molecular networks~\cite{zhu2020beyond}. Recently, various GNNs~\cite{pei2019geom,zhu2020beyond,xiao2021learning,lim2021large,zhu2021graph,chien2020adaptive,suresh2021breaking,yang2021diverse} have been proposed to deal with non-homophilous graphs with different methods such as potential neighbor discovery~\cite{pei2019geom,jin2021node,jin2021universal}, adaptive message propagation~\cite{chien2020adaptive,xiao2021learning}, exploring high-frequency signals~\cite{bo2021beyond} and higher-order message passing~\cite{zhu2020beyond,chen2020simple}. Despite their success,  they typically consider the semi-supervised setting and are trained with task-specific labeled data, while in practice, labels are often limited, expensive, and even inaccessible. In contrast, in this paper,  we study the problem of self-supervised learning: learning  node representations without relying on labels.

\textbf{Self-supervised Learning on Graphs.} 
Self-supervised learning holds great promise for improving representations when labeled data are scarce~\cite{chen2020simple,gidaris2018unsupervised,grill2020bootstrap,he2020momentum}. Earlier combinations of GNNs and self-supervised learning involve GraphSAGE~\cite{hamilton2017inductive}, VGAE~\cite{kipf2016variational}, and Graphite~\cite{grover2019graphite}, which typically follow the traditional network-embedding methods~\cite{perozzi2014deepwalk,tang2015line,grover2016node2vec} and adopt the link reconstruction or random walk principle. Since these  methods over-emphasize node proximity  at the cost of structural information~\cite{velivckovic2018deep,suresh2021adversarial,you2020graph}, various graph contrastive learning methods have been proposed~\cite{velivckovic2018deep,hassani2020contrastive,you2020graph,zhu2021graph,thakoor2021large,mo2022simple}, which aim to learn representations by  contrasting representation under differently augmented views and have achieved promising performance. However, they heavily rely on complex data- or task-specific augmentations, which are prohibitively difficult for non-homophilous graphs. 
% Some recent works also try to design heuristic pretext tasks~\cite{you2020does,jin2020self,hu2020strategies} on graphs. However, for non-homophilous graphs, building hand-crafted pretext tasks for each node tends to be infeasible when we know little knowledge underlying graph or the node has a diverse context.
Our work differs from the above methods and aims to answer the question of how to design an effective self-supervised learning scheme for non-homophilous graphs.

\textbf{Disentangled graph learning.}  
Our work is also related to but different from existing disentangled graph learning that aims to decouple latent factors in the graph. vGraph~\cite{sun2019vgraph} considers a mixture process to define the conditional probability on each edge. vGraph is a classical shallow network embedding algorithm. However, we instead tackle the problem of self-supervised learning with GNNs. There are a couple of works that explore the disentangled factors in the node-level~\cite{ma2019disentangled,liu2020independence}, edge-level~\cite{zhao2022exploring} and graph-level~\cite{yang2020factorizable}. Whereas these methods require task-specific labels that can be extremely scarce. By contrast, we address the problem of learning node representations on non-homophilous graphs without labels. Disentangled contrastive learning~\cite{li2021disentangled}  learns disentangled graph representations without labeled graphs. \cite{xu2021self} proposes the self-supervised graph-level representation learning with disentangled local and global structure~\cite{xu2021self}. Nevertheless, their goal is to conduct graph-level classification tasks, but we tackle a node-level representation problem. \cite{zhao2021graph,ding2022structural,wang2022clusterscl} consider adding a clustering layer or a prototype clustering component to capture the global information. However,  they are still based on contrastive learning~\cite{zhao2021graph,ding2022structural} or supervised learning~\cite{wang2022clusterscl}. By contrast, our framework is an unsupervised generative model and does not rely on graph augmentations and downstream labels. Moreover, we focus on non-homophilous graphs where connected nodes may not be similar to each other by decoupling the local diverse neighborhood context.
\section{Decoupled Self-supervised Learning}
In this section, we describe our problem setting with respect to self-supervised learning and demonstrate our approach. An illustration of generative and inference processes is depicted in Figure~\ref{fig:framework} (a).
\begin{figure}[t]
\centering
    \includegraphics[width=0.93\textwidth]{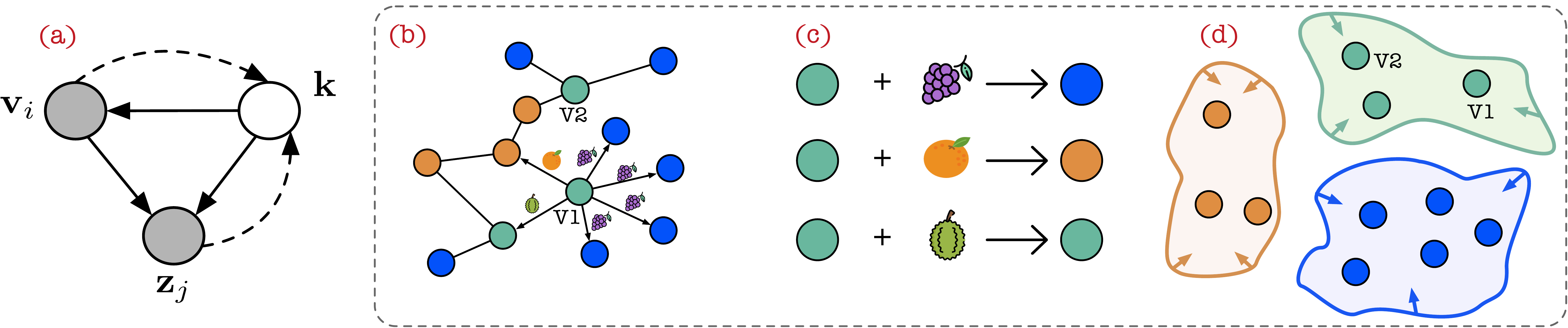}
    \vskip -0.5em
     \caption{An illustration of  (a) a graphical model for DSSL. The  discrete latent variable $k$ is used to instantiate a latent mixture-of-Gaussians
$\mathbf{v}_{i}$, which is then decoded to $\mathbf{z}_{j}$ and (b) a toy subgraph example of non-homophilous graphs where color denotes the class of the node. (c) Our model encodes the semantic shift by decoupling structure since different each node has different latent factors to make connections to its different neighbor, and (d) captures global semantic structure via the semantic clusters, which seeks to push the representations of $v_{1}$ and $v_{2}$ to their closest prototypes.}\label{fig:framework}
\vspace{-2em}
\end{figure}
% To tackle the aforementioned challenges, we propose a decoupled self-supervised learning (DSSL) framework  that enables to leverage  both feature and  structural information without manual labels. Specifically, instead of using contrastive learning, DSSL adopts the principle of the probabilistic generative model and  introduces a set of latent variables as latent factors of each relation in the neighborhood distribution. Our probabilistic  formulation  brings some benefits: (1) DSSL admits a generative process, which can effectively leverage the neighborhood structural information to better guide the self-supervised learning. (2) The latent variable  further empowers model the flexibility to decouple the heterogeneous and diverse patterns in neighborhood distributions and capture the long-range dependencies, which allows us to better utilize the multi-granularity information of neighborhood and  preserve underlying semantics.  It’s empirically verified that DSSL can achieve superior performance than current self-supervised learning methods  when modeling non-homophilous graphs. To avoid  the above-mentioned the inductive bias of homophily,  we propose a new architectures of GNN encoders with a decoupled propagation scheme which separately propagates node feature and structural information.  To effectively optimize the encoder and infer latent variables, we derive the evidence lower-bound of the maximum-likelihood estimation.
\label{sec:method}
\subsection{Problem Formulation}
We consider  a graph $G=(\mathcal{V}, \mathcal{E})$, where $\mathcal{V}$ is a set of $|\mathcal{V}|=N$ nodes and $\mathcal{E} \subseteq \mathcal{V} \times \mathcal{V}$ is a set of $|\mathcal{E}|$ edges between nodes. $\mathbf{A} \in$ $\{0,1\}^{N \times N}$ is the adjacency matrix of $G$. The $(i, j)$-th element $\mathbf{A}_{i j}=1$  if there exists an edge ($v_i, v_j$) between node $v_{i}$ and $v_{j}$, otherwise $\mathbf{A}_{i j}=0$.  The  matrix $\mathbf{X} \in \mathbb{R}^{N \times D}$ describes node features. The $i$-th row of $\mathbf{X}$, i.e., $\mathbf{x}_{i}$, is the feature  of node $v_{i}$. Given the graph  $\mathcal{G}=(\mathbf{X}, \mathbf{A})$, the objective of self-supervised node representation learning is to learn an encoder function $f_{{\theta}}(\mathbf{X}, \mathbf{A}): \mathbb{R}^{N \times N} \times \mathbb{R}^{N \times D}  \rightarrow \mathbb{R}^{N \times D^{\prime}} $  where ${\theta}$ denotes its parameters, such that the  representation of node $v_{i}$: $f_{{\theta}}(\mathbf{X}, \mathbf{A})[i]$, can be used for downstream tasks such as node classification.

\subsection{The Probabilistic Framework}
In this section, we introduce our framework, decoupled self-supervised learning, which can learn meaningful node representations on non-homophilous graphs by capturing the intrinsic graph structure. The core idea of DSSL is to model the distributions of node neighbors via a mixture of generative processes in the representation space. Specifically, we model the generation of neighbors by assuming each node has latent heterogeneous factors which are utilized to make connections to its different neighbors. Intuitively,  the factor denotes various reasons behind why two nodes are connected in non-homophilous graphs. For instance, two nodes in a school network will be connected depending on some factors such as colleagues, friends, or classmates; in protein networks, even if they do not have similar features, different amino acid types are likely to be connected due to various interactions. 

Formally, let $f_{{\theta}}(\mathbf{X}, \mathbf{A})[i]=\mathbf{v}_{i}$ and  $f_{{\theta}}(\mathbf{X}, \mathbf{A})[j]=\mathbf{z}_{j}$ be the representations  of nodes $v_{i}$ and $v_{j}$, respectively. Here we utilize different notations, i.e., $\mathbf{v}$ and $\mathbf{z}$ to distinguish between central and neighbor nodes since
each node plays two roles: the node itself and a specific neighbor of other nodes. Our goal is to find the encoder parameter $\theta$ which maximizes the likelihood of distribution $p(\mathbf{z}_{j}|\mathbf{v}_{i};\theta)$  on central node  and its neighbor. To model the unobserved factors, we associate every node $\mathbf{v}_{i}$ with a discrete latent variable $k$ to indicate to which factor $\mathbf{v}_{i}$ has.  Assume that there are  $K$ factors in total, the log-likelihood of node neighbors  $\mathbf{v}_{i}$ can be written by marginalizing out the latent variables: 
\begin{linenomath}
\small
\begin{align}
\mathcal{L}_{{DSSL}}({\theta})= \frac{1}{|{N(i)}|}\sum_{j\in N(i)}\log [p_{\theta}(\mathbf{z}_{j}| \mathbf{v}_{i})]= \frac{1}{|{N(i)}|} \sum_{j\in N(i)} \log \Big[\sum_{k=1}^{K} p_{\theta}(\mathbf{z}_{j}|\mathbf{v}_{i},k)p_{\theta}(k|\mathbf{v}_{i})\Big], \label{generative}
\end{align}
\end{linenomath}
where $N(i)$ is the set of out-neighbors of $v_{i}$, the  distribution 
$p_{\theta}(k|\mathbf{v}_{i})$  indicates the 
assignment of latent semantics  over central node representation $\mathbf{v}_{i}$, and $p_{\theta}(\mathbf{z}_{j}|\mathbf{v}_{i},k)$ is the probability that node ${v}_{i}$ and its neighbor ${v}_{j}$ are connected under factor $k$. Unlike previous works~\cite{tang2015line,grover2016node2vec,kipf2016variational}, which directly encourage nearby nodes to have similar representations, we provide an alternative way to model node neighbors and seek to decouple their latent relationship without any prior on neighbor partitions. The similar high-level ideas of modeling latent variables between two nodes has also been explored in local augmentation in graphs (LA-GCN)~\cite{liu2022local} and vGraph~\cite{sun2019vgraph}. Different from them, we consider modeling discrete relations in the learned representation space for self-supervised learning on non-homophilous graphs. In Eq.~\eqref{generative}, probabilities $p_{\theta}(k|\mathbf{v}_{i})$ and $p_{\theta}(\mathbf{z}_{j}|\mathbf{v}_{i},k)$  are not specified, and involve discrete latent variables. To make it solvable, we introduce the following generative process.

Let $\boldsymbol{\mu}_{k}$ and $\boldsymbol{\Sigma}_{k}$ be the mean and variance of the latent mixture component $k$, and $\pi_{k}$ be its corresponding mixture probability. The generation of a node  and its neighbor shown in Figure~\ref{fig:framework} (a) typically involves  three steps: 
(1) draw a latent  variable $k$ from a categorical distribution $p(k)$ on all mixture components, where $p(k)$ is usually defined as uniform distribution $p(k)=\frac{1}{K}$ for unknown graphs and better generalization, (2) draw the central node representation $\mathbf{v}_{i}$ from the Gaussian distribution  $p_{\theta}(\mathbf{v}_{i}|k)=\mathcal{N}(\mathbf{v}_{i};\boldsymbol{\mu}_{k}$, $\boldsymbol{\Sigma}_{k}$), and (3) draw the neighbor representation $\mathbf{z}_{j}$ from  Gaussian distribution 
$p_{\theta}(\mathbf{z}_{j}|\mathbf{v}_{i},k)=\mathcal{N}(\mathbf{z}_{j}; \boldsymbol{\mu}_{z_j},\boldsymbol{\Sigma}_{j})$ where  the mean depends both on central representation and latent variable: $\boldsymbol{\mu}_{z_j}=\mathbf{v}_{i}+\beta g_{\theta}({k})$. Here the projector $g_{\theta}(\cdot)$ denotes another network that embeds latent variable $k$ to the representation space, and $\beta$ is the parameter that controls the strength of interpolation. In practice, to reduce complexity, we consider using isotropic Gaussian, i.e., $\forall k:~ \Sigma_{k}=\mathbf{I}\sigma_{1}^{2}$ and $\forall j:~ \Sigma_{j}=\mathbf{I}\sigma_{2}^{2}$ where $\mathbf{I}$ is the
identity matrix, $\sigma_{1}$ and $\sigma_{2}$ are hyperparameters. In alignment with  this generative process, the joint distribution of the edge and latent variable can be 
% In particular, if we consider $f_{\theta}(v_i)=\mathbf{z}_{i}$ and $\boldsymbol{\mu}_{k}$ as the surrogate representations of $v_{i}$ and latent variable $k$, respectively, we can specify  probability $p(k|{v}_{i};\theta)$ as isotropic Gaussian distribution $\mathcal{N}(\boldsymbol{\mu}_{k}; f_{\theta}(v_i),\sigma^{2}\mathbf{I})$ in the representation space.
written as:
\begin{linenomath}
\begin{align}
p_{\theta}(\mathbf{v}_{i},\mathbf{z}_{j},k)=p_{\theta}(\mathbf{z}_{j}| \mathbf{v}_{i},k)p_{\theta}(\mathbf{v}_{i}|k)p(k). \label{eq:genk}
\end{align}
\end{linenomath}
Intuitively, $p_{\theta}(\mathbf{v}_{i}|k)$ can be viewed as the probability for node representation under the $k$-th mixture component. Regarding   $p_{\theta}(\mathbf{z}_{j}|\mathbf{v}_{i},k)=\mathcal{N}(\mathbf{z}_{j}; \boldsymbol{\mu}_{z_j},\boldsymbol{\Sigma}_{j})$, this formulation is inspired by knowledge graph embedding~\cite{bordes2013translating}, where two entities should be closed to each other under a certain relation operation. However, unlike knowledge graph embedding, the relational context is a latent variable and not observed in our setting. Intuitively, instead of enforcing the exact representation alignment of two linked nodes~\cite{kipf2016variational,grover2016node2vec,hamilton2017representation,hamilton2017inductive,tang2015line}, our design can relax this strong homophily assumption and account for the semantic shift of representations between the central node and its neighbors.

Now, the posterior probability $p_{\theta}(k|\mathbf{v}_{i})$ in  Eq.~\eqref{generative} can be derived by using Bayesian rules as follows:  
\begin{linenomath}
\small
\begin{align}
p_{\theta}(k|\mathbf{v}_{i})=\frac{p_{\theta}(\mathbf{v}_{i}|k)p(k)}{\sum_{k^{\prime}=1}^{K}p_{\theta}(\mathbf{v}_{i}|k')p(k')}=\frac{ \mathcal{N}(\mathbf{v}_{i}; \boldsymbol{\mu}_{k},\mathbf{I}\sigma_{1}^{2})p(k)}{\sum_{k^{\prime}=1}^{K} \mathcal{N}(\mathbf{v}_{i}; \boldsymbol{\mu}_{k'},\mathbf{I}\sigma_{1}^{2})p(k')}, \label{Eq:posterior}
\end{align}
\end{linenomath}%
where $\boldsymbol{\mu}=\{\boldsymbol{\mu}_{k}\}_{k=1}^{K}$ can be treated as a set of trainable prototype representations, which 
are the additional distribution parameters. This posterior probability represents the soft semantic assignment of the learned representation to the prototypes,  which makes the node with similar semantic properties be close to its prototype and  encode both the inter- and intra-cluster variation. Substituting  $p_{\theta}(k|\mathbf{v}_{i})$ and $p_{\theta}(\mathbf{z}_{j}|\mathbf{v}_{i},k)$ in   Eq.~\eqref{generative} with the specified  probabilities, we get final objective $\mathcal{L}_{{DSSL}}({\theta},\boldsymbol{\mu})$.

\subsection{Evidence Lower Bound}
Given the  framework above,  we are interested in: (i) learning the model parameters $\theta$ and $\boldsymbol{\mu}$ by maximizing the log-likelihood $\mathcal{L}_{{DSSL}}({\theta},\boldsymbol{\mu})$, and (ii) inferring the posterior of latent variable $k$ for each observed links. However, it is computationally intractable to directly solve these two problems due to the latent variables. To solve this, we resort to amortized variational inference methods~\cite{DBLP:journals/corr/KingmaW13}, and maximize the evidence lower-bound (ELBO) of Eq.~\eqref{generative}, i.e.,
$\mathcal{L}_{\mathrm{DSSL}}({\theta},\boldsymbol{\mu})\geq \mathcal{L}_{\mathrm{DSSL}}({\theta},\phi,\boldsymbol{\mu})$:
\begin{linenomath}
\small
\begin{align}
\mathcal{L}_{\mathrm{DSSL}}(\theta,\phi,\boldsymbol{\mu}) =\frac{1}{|{N(i)}|} \sum_{j\in N(i)} \mathbb{E}_{q_{\phi}(k| \mathbf{v}_{i}, \mathbf{z}_{j})}[\log p_{\theta}(\mathbf{z}_{j}| \mathbf{v}_{i}, k)+\log p_{\theta}(k| \mathbf{v}_{i})]+\mathcal{H}(q_{\phi}(k| \mathbf{v}_{i},\mathbf{z}_{j})), 
\label{ELBO}
\end{align}
\end{linenomath}%
where $q_{\phi}({k}|\mathbf{v}_{i},\mathbf{z}_{j})$ is the introduced  variational distribution parameterized by $\phi$ and $\mathcal{H}(\cdot)$ is the entropy operator. $\mathbf{z}_{j}$ is included in this variational posterior, so the inference is also conditioned on the neighborhood information. We derive this ELBO in Appendix~\ref{app:ELBO}. Maximizing this ELBO w.r.t. $\{\theta,\phi,\boldsymbol{\mu}\}$ is equivalent to (i) maximizing $\mathcal{L}_{\mathrm{DSSL}}({\theta},\boldsymbol{\mu})$ and to (ii) make variational  $q_{\phi}({k}|\mathbf{v}_{i},\mathbf{z}_{j})$ be close to true posterior. Plugging the parameterized probabilities into this ELBO, we obtain the following loss to minimize. See Appendix~\ref{app:loss} for corresponding derivations.
\begin{linenomath}
\small
\begin{align}
\mathcal{L}=\frac{1}{|{N(i)}|}& \sum_{j\in N(i)} \mathbb{E}_{q_{\phi}(k| \mathbf{v}_{i}, \mathbf{z}_{j})}\big[ \|\mathbf{v}_{i}+\beta g_{\theta}(k)-\mathbf{z}_{j}\|_{2}^{2}\big]-{\sigma}_{2}^{2} \mathbb{E}_{q_{\phi}(k|\mathbf{v}_{i})}\Big[\log \frac{\exp (\mathbf{v}_{i}^{\top} \cdot \boldsymbol{\mu}_{k} / {\sigma}_{1}^{2})}{\sum_{k^{\prime}=1}^{K}  \exp (\mathbf{v}_{i}^{\top} \cdot \boldsymbol{\mu}_{k’} / {\sigma}_{1}^{2})}\Big] \nonumber \\
&-\mathcal{H}(q_{\phi}(k| \mathbf{v}_{i},\mathbf{z}_{j})),%~~~\text{where} ~~q_{\phi}(k|\mathbf{v}_{i})={1}/{|{N(i)}|} \sum\nolimits_{j\in N(i)} q_{\phi}(k| \mathbf{v}_{i}, \mathbf{z}_{j})
\label{loss}
\end{align}
\end{linenomath}%
where $q_{\phi}(k|\mathbf{v}_{i})={1}/{|{N(i)}|} \sum\nolimits_{j\in N(i)} q_{\phi}(k| \mathbf{v}_{i}, \mathbf{z}_{j})$ is the posterior probability of semantic assignment  for central node $v_{i}$, by aggregating all its neighbors. Thus, the first term (denoted as $\mathcal{L}_{local}$) in the loss encourages the model to reconstruct the local neighbors while considering different semantic shifts captured by latent variable $k$ (see Figure~\ref{fig:framework} (b)). The second term (denoted as $\mathcal{L}_{global}$) encourages the model to perform clustering with learned representation where possible, i.e., seeking to push the representation $\mathbf{v}_{i}$ to its closest prototype cluster (see Figure~\ref{fig:framework} (c)). The final entropy term makes the model choose to have high entropy over $q_{\phi}(k| \mathbf{v}_{i}, \mathbf{z}_{j})$ such that all of the  $K$-channel losses must be low. Overall, this loss derived from ELBO can capture global semantic similarities over neighborhoods and learn to decouple different latent patterns in the local neighbors.

% Figure~\ref{xx} shows  these  high-level concepts.

Regarding the variaitonal distribution $q_{\phi}({k}|\mathbf{v}_{i},\mathbf{z}_{j})$, we model it as categorical distribution since  $k$ is a discrete multinomial variable. Specifically, the representations $\mathbf{v}_{i}$ and $\mathbf{z}_{j}$ are encoded to a combined representation and then   $q_{\phi}({k}|\mathbf{v}_{i},\mathbf{z}_{j})$ is determined by an output softmax  inference head as follows:
\begin{linenomath}
\small
\begin{align}
q_{\phi}({k}|\mathbf{v}_{i},\mathbf{z}_{j})=\frac{\exp (h_{\phi}([\mathbf{v}_{i};\mathbf{z}_{j}])[k])}{\sum_{k'=1}^{K}\exp (h_{\phi}([\mathbf{v}_{i};\mathbf{z}_{j}])[k'])},
\end{align}
\end{linenomath}%
where $h_{\phi}$ denotes the  inference network parameterized by $\phi$ and $[\cdot,\cdot]$ can be the element-wise product or concatenation operation. $h_{\phi}([\mathbf{v}_{i};\mathbf{z}_{j}])[k]$ indicates the $k^{th}$ element, i.e., the logit corresponding the latent context $k$. Instead of introducing variational parameters individually, we consider the amortization inference~\cite{DBLP:journals/corr/KingmaW13,xiao2021general}, which fits a shared network to calculate each local parameter.

For the expectation terms in Eq.~\eqref{loss}, back-propagation through the discrete variable $k$ is not directly feasible. We alleviate this by adopting the  Straight-Through Gumbel-Softmax estimator~\cite{jang2016categorical}, which provides a continuous differentiable approximation for drawing samples from a categorical distribution. Specifically, for each sample, a latent cluster vector $\mathbf{c}\in (0,1)^{K}$ is drawn from:
\begin{linenomath}
\small
\begin{align}
\mathbf{c}[k]=\frac{\exp( ( h_{\phi}([\mathbf{v}_{i};\mathbf{z}_{j}])[k] +\boldsymbol{\epsilon}[k]) / \gamma)}{\sum_{k'=1}^{K} \exp (( h_{\phi}([\mathbf{v}_{i};\mathbf{z}_{j}])[k] +\boldsymbol{\epsilon}[k']) / \gamma)},
\end{align}
\end{linenomath}%
where $\boldsymbol{\epsilon}[k]$ is i.i.d drawn from the $\operatorname{Gumbel}(0,1)$ distribution and $\gamma$ is a temperature. With this reparameterization trick,  we can obtain the surrogate $\mathbb{E}_{q_{\phi}({k}|\mathbf{v}_{i},\mathbf{z}_{j})}[ \|\mathbf{v}_{i}+\beta g_{\theta}(k)-\mathbf{z}_{j}\|_{2}^{2}] \simeq \mathbb{E}_{\boldsymbol{\epsilon}}[ \|\mathbf{v}_{i}+\beta g_{\theta}(\mathbf{c})-\mathbf{z}_{j}\|_{2}^{2}]$  and
the gradients are estimated with Monte Carlo. The expectation term over $q_{\phi}(k|\mathbf{v}_{i})$ can be similarly estimated. Then,  $\{\theta,\phi,\boldsymbol{\mu}\}$ in  Eq.~\eqref{loss}  can be efficiently  solved by gradient descent.

\subsection{Algorithm Optimization}
The overall optimization involves simultaneously training (1) the encoder $f_{\theta}$, (2) the  projector $g_{\theta},(3)$ the inference predictor $h_{\phi}$ and (4) the prototypes $\boldsymbol{\mu}$. The most canonical way to update the parameters is stochastic gradient descent. However, we observe that stochastically updating all parameters suffers from two problems: (1) The objective admits trivial solutions, e.g., outputting the same representation for all nodes in the optimization process. (2) Updating prototypes without any constraints will lead to a degenerate solution, i.e., all nodes are assigned to a single  cluster. 

To address the issue of trivial solutions, inspired by the recent works~\cite{he2020momentum,grill2020bootstrap}, we consider an asymmetric encoder architecture that includes online and target encoders. Specifically, for each  node pair $(v_{i},v_{j})$, the online encoder $f_{\theta}$ produces the representation of the central node $\mathbf{v}_{i}$; while the target encoder $f_{\xi}$ is used to produce the representation of its neighbor $\mathbf{z}_{j}$. Importantly, the gradient of loss is only used to update the online encoder  $f_{\theta}$ while being blocked in the target encoder. The weights of the target encoder $\xi$ are  updated via the exponential moving average (EMA) of the online encoder $\theta$:
\begin{linenomath}
\small
\begin{align}
    [ \theta, \phi, \boldsymbol{\mu}] \leftarrow  [ \theta, \phi, \boldsymbol{\mu}]-\boldsymbol{\Gamma}\left(\nabla_{ \theta, \phi, \boldsymbol{\mu}} \mathcal{L}\right),~~~~~ \xi \leftarrow \tau \xi+(1-\tau) \theta,
\end{align}
\end{linenomath}%
where $\boldsymbol{\Gamma}(\cdot)$ indicates a stochastic optimizer and $\tau \in[0,1]$ is the target decay rate. This update introduces an asymmetry between two encoders that prevents collapse to trivial solutions~\cite{grill2020bootstrap,thakoor2021large}.
To alleviate the second issue, besides the stochastic update, we also  apply a global update for the  prototype vectors $\boldsymbol{\mu}=\left\{\boldsymbol{\mu}_{i}\right\}_{i=1}^{K}$ at the end of each training epoch to avoid a degenerate solution: 
\begin{linenomath}
\small
\begin{align}
    \boldsymbol{\mu}_{k}=\frac{\sum_{i=1}^{N}\pi_{i} (k)\cdot \mathbf{v}_{i}}{\|\sum_{i=1}^{N}\pi_{i} (k)\cdot \mathbf{v}_{i} \|_{2}^{2}},~~ \text{where}~~ \pi_{i} (k)={1}/{|{N(i)}|} \sum\nolimits_{j\in N(i)} q_{\phi}(k| \mathbf{v}_{i}, \mathbf{z}_{j}). \label{Eq:global}
\end{align}
\end{linenomath}%
The derivation is provided in Appendix~\ref{app:global}.  Intuitively, $\pi_{i} (k)$ reflects the  degree of relevance of node $v_{i}$ to the $k^{th}$ prototype. Instead of only updating the prototypes in a mini-batch, we also aggregate all the representations as the prototype based on the soft assignment probability at the end of the epoch. After the training is finished, we only keep the online encoder $f_{\theta}$ for the downstream task. Our full algorithm and network are provided in Appendix~\ref{app:alg}. The details about time complexity of DSSL is given in Appendix~\ref{sec:Time}, which scales linearly in the size of edges.

\subsection{Theoretical Analysis}
In this section, we provide an analysis of the proposed framework. We first present the connection between the proposed objective and the mutual information maximization, then show that the learned representations by our objective provably enjoy a good downstream performance. Due to the space limitation, all proofs of theorems and corollaries are provided in Appendix~\ref{app:theory}.

We denote the random variable of the input graph as $\mathcal{G}$ and the downstream label as $\mathbf{y}$. For clarity, we omit subscript $i$ in what follows. From an information-theoretic learning perspective, a desirable way is to  maximize the mutual information $I(\mathbf{v},\mathbf{y})$ between the representation $\mathbf{v}$ and downstream label $\mathbf{y}$. However,  due to the lack of the downstream label, self-supervised learning resorts to maximizing $I(\mathbf{v},\mathbf{s})$ where $\mathbf{s}$ is different designed self-supervised signal~\cite{tsai2020demystifying,bui2021exploiting,wang2021residual,zhang2021canonical,federici2019learning}. In our method, we have two self-supervised signals: the global semantic cluster information inferred by $k$ and the local structural roles captured by the representations of the neighbors $\mathbf{z}=\{\mathbf{z}_{i}| v_i \in {N(v)}\}$ of node $v$.

Then, we can  interpret our objective in Eq.~\eqref{loss} from the information maximization perspective:
\begin{theorem}\label{the:mutual}
 Optimizing local and global terms in Eq.~\eqref{loss} is equivalent to maximizing the mutual information between the representation $\mathbf{v}$ and global signal ${k}$ and maximizing the conditional mutual information between $\mathbf{v}$ and the local signal  $\mathbf{z}$, conditioned on global signal $k$. Formally, we have:
    \begin{align}
       \max_{\theta, \phi, \boldsymbol{\mu}} \mathcal{L}\Rightarrow  \max_{\mathbf{v}} {I}(\mathbf{v};k)+ {I}(\mathbf{v};\mathbf{z}|k)= {I}(\mathbf{v};k,\mathbf{z}).
    \end{align}
\end{theorem}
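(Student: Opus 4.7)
The plan is to bound each piece of $\mathcal{L}$ by a variational lower bound on the appropriate mutual information and then combine via the chain rule. The chain rule identity $I(\mathbf{v}; k, \mathbf{z}) = I(\mathbf{v}; k) + I(\mathbf{v}; \mathbf{z} \mid k)$ immediately gives the final equality in the theorem, so the task reduces to matching $\mathcal{L}_{global}$ with a lower bound on $I(\mathbf{v};k)$ and $\mathcal{L}_{local}$ with one on $I(\mathbf{v};\mathbf{z}\mid k)$, up to constants independent of $(\theta,\phi,\boldsymbol{\mu})$.

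For the global term, I would first recognise that the softmax $\exp(\mathbf{v}^\top\boldsymbol{\mu}_k/\sigma_1^2)/\sum_{k'}\exp(\mathbf{v}^\top\boldsymbol{\mu}_{k'}/\sigma_1^2)$ is precisely the model posterior $p_\theta(k\mid\mathbf{v})$ obtained in Eq.~\eqref{Eq:posterior} under the uniform prior $p(k)=1/K$. Hence $\mathcal{L}_{global}=-\sigma_2^2\,\mathbb{E}_{q_\phi(k\mid\mathbf{v})}[\log p_\theta(k\mid\mathbf{v})]$, and the Barber--Agakov inequality yields $I(\mathbf{v};k)\geq \mathbb{E}[\log p_\theta(k\mid\mathbf{v})]+H(k)$ with $H(k)=\log K$ constant, so minimising $\mathcal{L}_{global}$ is equivalent to maximising a variational lower bound on $I(\mathbf{v};k)$.

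For the local term, use $p_\theta(\mathbf{z}\mid\mathbf{v},k)=\mathcal{N}(\mathbf{z};\mathbf{v}+\beta g_\theta(k),\mathbf{I}\sigma_2^2)$, so that $-\log p_\theta(\mathbf{z}\mid\mathbf{v},k)$ equals $\|\mathbf{v}+\beta g_\theta(k)-\mathbf{z}\|_2^2/(2\sigma_2^2)$ plus a constant; thus $\mathcal{L}_{local}$ is proportional to $-\mathbb{E}_{q_\phi(k\mid\mathbf{v},\mathbf{z})}[\log p_\theta(\mathbf{z}\mid\mathbf{v},k)]$, and Barber--Agakov once more gives $I(\mathbf{v};\mathbf{z}\mid k)\geq \mathbb{E}[\log p_\theta(\mathbf{z}\mid\mathbf{v},k)]+H(\mathbf{z}\mid k)$ with $H(\mathbf{z}\mid k)$ independent of the trainable parameters. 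Summing the two bounds and invoking the chain rule then closes the argument.

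The subtle obstacle is interpretational: the expectations in $\mathcal{L}$ are taken under $q_\phi(k\mid\mathbf{v},\mathbf{z})$ rather than under the joint distribution against which the Barber--Agakov bounds are naturally stated. The cleanest way to handle this is to observe that the cross-entropy $-\mathbb{E}_{q_\phi}[\log p_\theta(k\mid\mathbf{v})]$ together with the entropy regulariser $-\mathcal{H}(q_\phi)$ is, up to marginalisation, exactly $\mathrm{KL}(q_\phi(k\mid\mathbf{v},\mathbf{z})\,\|\,p_\theta(k\mid\mathbf{v}))$, which the full ELBO in Eq.~\eqref{ELBO} drives to zero at stationarity; once $q_\phi$ agrees with the model posterior, expectations under $q_\phi$ coincide with those under the $p_\theta$-induced joint and the two lower bounds become statements about genuine mutual informations. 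All remaining work --- absorbing $\sigma_1^2,\sigma_2^2,H(k),H(\mathbf{z}\mid k)$, and the Gaussian normaliser into additive constants --- is routine book-keeping.
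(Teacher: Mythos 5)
Your proposal follows essentially the same route as the paper: both arguments combine the chain rule $I(\mathbf{v};k,\mathbf{z})=I(\mathbf{v};k)+I(\mathbf{v};\mathbf{z}\mid k)$ with variational lower bounds that are tight up to parameter-independent constants, and both identify $\mathcal{L}_{local}$ with the Gaussian negative log-likelihood $-\log p_{\theta}(\mathbf{z}_{j}\mid\mathbf{v},k)$. The one cosmetic difference is the global term: the paper invokes the contrastive (InfoNCE-style) bound of Lemma~\ref{Lemma-3} with $f_{\theta}(\mathbf{v},k)=\mathbf{v}^{\top}\boldsymbol{\mu}_{k}/\sigma_{1}^{2}$ and $\tilde{\mathcal{B}}$ equal to all $K$ values of $k$ uniformly, whereas you use the Barber--Agakov bound with the softmax read as the model posterior $p_{\theta}(k\mid\mathbf{v})$; in this special case the two bounds coincide (the $\log|\tilde{\mathcal{B}}|=\log K$ offset is exactly your $H(k)$), so nothing is lost. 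Your explicit treatment of the mismatch between expectations under $q_{\phi}$ and under the true joint --- via the KL term that the full ELBO drives to zero --- is in fact more careful than the paper, which passes over this point silently.

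The one genuine omission is the aggregation over the neighborhood. In the theorem, $\mathbf{z}$ denotes the \emph{collection} of all neighbor representations $\{\mathbf{z}_{j}\,:\,j\in N(v)\}$, while $\mathcal{L}_{local}$ is an average of per-neighbor reconstruction terms, each of which lower-bounds only $I(\mathbf{v};\mathbf{z}_{j}\mid k)$ for a single $j$. To conclude anything about $I(\mathbf{v};\mathbf{z}\mid k)$ you need the monotonicity $I(\mathbf{v};\mathbf{z}_{1},\dots,\mathbf{z}_{|N(v)|};\,\cdot\mid k)\geq I(\mathbf{v};\mathbf{z}_{j}\mid k)$, which is the content of the paper's Lemma~\ref{Lemma-1} (a conditional ``adding variables cannot decrease mutual information'' statement); averaging the per-neighbor bounds then still yields a valid lower bound on $I(\mathbf{v};\mathbf{z}\mid k)$. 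Your write-up treats $\mathbf{z}$ as a single vector in the Gaussian likelihood, which silently conflates the per-neighbor quantity with the joint one. This is a one-line fix, but as written the step from the averaged loss to the stated conditional mutual information is not justified.
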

\vspace{-1em}
This theorem suggests that we essentially combine both local structure and global semantic information as the self-supervised signal and maximize the mutual information between the representation $\mathbf{v}$ and their joint distribution $(k,\mathbf{z})$. Next, we discuss how the learned representation affects the downstream task $\mathbf{y}$ based on the information bottleneck principle ~\cite{tsai2020demystifying,federici2019learning,wang2021residual,bui2021exploiting}. The rationality of self-supervised learning is that the task-relevant information lies mostly in the shared
information between the input and the self-supervised signals~\cite{tsai2020demystifying,federici2019learning,bui2021exploiting}. Specifically,  we formulate our lightweight and reasonable assumption below, which serves as a foundation for our analysis.

\begin{assumption}\label{assumption}
Nodes with similar labels should have similar "local structural roles" and "global semantic clusters." In this work, we equate "local structure" with the 1-hop neighborhood and "global semantic" with the clustering membership of a node. Formally, we have the task-relevant information $\mathbf{y}$ left in $\mathcal{G}$ except that in joint self-supervised signal $(\mathbf{z},{k})$ is relatively small: $I(\mathcal{G};\mathbf{y}|\mathbf{z},{k})\leq \epsilon$.
\end{assumption}
\vspace{-0.5em}
Intuitively, this assumption indicates that most of the task-relevant information in the graph is contained in the self-supervised signal (local neighborhood and global  semantic patterns). Based on this assumption, we have the following theorem, which reveals why the downstream tasks can benefit from the learned representations learned by our objective function.
\begin{theorem}\label{the:bound}
Let $\mathbf{v}_{\mathrm{joint}}=\arg \max_{\mathbf{v}} I(\mathbf{v};\mathbf{z},k),\mathbf{v}_{\mathrm{local}}=\arg \max_{\mathbf{v}} I(\mathbf{v};k)$, and $\mathbf{v}_{\mathrm{global}}=\arg \max_{\mathbf{v}}I(\mathbf{v};\mathbf{z})$. Formally, we have the following inequalities about the task-relevant information:
   \begin{align}
        I(\mathcal{G};\mathbf{y})=\max_{\mathbf{v}} I(\mathbf{v};\mathbf{y})\geq I(\mathbf{v}_{\mathrm{joint}};\mathbf{y})\geq \max (I(\mathbf{v}_{\mathrm{local}};\mathbf{y}),I(\mathbf{v}_{\mathrm{global}};\mathbf{y}))\geq I(\mathcal{G};\mathbf{y})-\epsilon. \label{Eq:proposition}
   \end{align}
\end{theorem}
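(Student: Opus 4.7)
The plan is to establish the four-link chain by combining the data processing inequality (DPI), the chain rule for mutual information, and Assumption~1, handling the links left-to-right. Throughout, I would use the fact that every candidate representation $\mathbf{v}$ is produced by the encoder $f_\theta$ from $\mathcal{G}$, so the Markov chain $\mathbf{y}\to\mathcal{G}\to\mathbf{v}$ holds. DPI then gives $I(\mathbf{v};\mathbf{y})\le I(\mathcal{G};\mathbf{y})$ for every such $\mathbf{v}$, with equality attained by any sufficient statistic of $\mathcal{G}$ for $\mathbf{y}$ (e.g., $\mathbf{v}=\mathcal{G}$), which settles $I(\mathcal{G};\mathbf{y})=\max_{\mathbf{v}}I(\mathbf{v};\mathbf{y})$. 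The immediate next link $\max_{\mathbf{v}}I(\mathbf{v};\mathbf{y})\ge I(\mathbf{v}_{\mathrm{joint}};\mathbf{y})$ is trivial because $\mathbf{v}_{\mathrm{joint}}$ is one admissible choice in the maximization.

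For the middle inequality $I(\mathbf{v}_{\mathrm{joint}};\mathbf{y})\ge \max(I(\mathbf{v}_{\mathrm{local}};\mathbf{y}), I(\mathbf{v}_{\mathrm{global}};\mathbf{y}))$, I would exploit that $\mathbf{v}_{\mathrm{joint}}=\arg\max_{\mathbf{v}} I(\mathbf{v};\mathbf{z},k)$ is, under a sufficiently expressive encoder, a sufficient statistic of $\mathcal{G}$ for $(\mathbf{z},k)$, and therefore also sufficient for each marginal $k$ and $\mathbf{z}$. Under this interpretation the Markov chain $\mathbf{y}\to k\to\mathbf{v}_{\mathrm{local}}$ can be refined by inserting $\mathbf{v}_{\mathrm{joint}}$ as an information-preserving intermediary for $k$; two applications of DPI then yield $I(\mathbf{v}_{\mathrm{joint}};\mathbf{y})\ge I(\mathbf{v}_{\mathrm{local}};\mathbf{y})$, and the symmetric argument with $\mathbf{z}$ handles $\mathbf{v}_{\mathrm{global}}$.

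For the rightmost inequality, I would combine the chain rule with Assumption~1:
\begin{equation*}
I(\mathbf{z},k;\mathbf{y})=I(\mathcal{G};\mathbf{y})-I(\mathcal{G};\mathbf{y}\mid\mathbf{z},k)\ge I(\mathcal{G};\mathbf{y})-\epsilon .
\end{equation*}
Because $\mathbf{v}_{\mathrm{joint}}$ is sufficient for $(\mathbf{z},k)$, the left-hand side can be transferred to $\mathbf{v}_{\mathrm{joint}}$ by DPI, giving $I(\mathbf{v}_{\mathrm{joint}};\mathbf{y})\ge I(\mathcal{G};\mathbf{y})-\epsilon$, and the $\max$ over the two single-signal representations then inherits the bound through the middle-inequality argument, which positions $\mathbf{v}_{\mathrm{joint}}$ as an upper envelope of $\mathbf{v}_{\mathrm{local}}$ and $\mathbf{v}_{\mathrm{global}}$ within the encoder class.

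The hardest link is the last one, because Assumption~1 only controls the joint residual $I(\mathcal{G};\mathbf{y}\mid \mathbf{z},k)$. This directly lower-bounds $I(\mathbf{v}_{\mathrm{joint}};\mathbf{y})$ by $I(\mathcal{G};\mathbf{y})-\epsilon$ but does not, by itself, lower-bound the $\max$ of the two single-signal quantities unless one additionally assumes either (i) each marginal signal alone captures most of the task-relevant information up to the same $\epsilon$ budget, or (ii) the $\max$ is interpreted as being realized by the sufficient-statistic $\mathbf{v}_{\mathrm{joint}}$ acting as the envelope. Making this modeling choice explicit and verifying that the chain closes is where most of the care is required; the other three links are straightforward applications of DPI and the chain rule.
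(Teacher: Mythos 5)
Your overall route is the same as the paper's: the Markov chain $(k,\mathbf{z})\leftrightarrow\mathbf{y}\leftrightarrow\mathcal{G}\rightarrow\mathbf{v}$ plus data processing for the first two links, sufficiency of $\mathbf{v}_{\mathrm{joint}}$ for the middle one, and the chain rule plus Assumption~\ref{assumption} for the last. The one ingredient you are missing is \emph{minimality}: the paper's auxiliary lemma in the appendix shows that optimizing the objective also forces $H(\mathbf{v}_{\mathrm{joint}}\mid k,\mathbf{z})=0$, and sufficiency together with minimality yields the exact identity $I(\mathbf{v}_{\mathrm{joint}};\mathbf{y})=I(\mathbf{v}_{\mathrm{joint}};\mathbf{y};k,\mathbf{z})+I(\mathbf{v}_{\mathrm{joint}};\mathbf{y}\mid k,\mathbf{z})=I(\mathcal{G};\mathbf{y})-I(\mathcal{G};\mathbf{y}\mid k,\mathbf{z})$, with the analogous identities $I(\mathbf{v}_{\mathrm{local}};\mathbf{y})=I(\mathcal{G};\mathbf{y})-I(\mathcal{G};\mathbf{y}\mid k)$ and $I(\mathbf{v}_{\mathrm{global}};\mathbf{y})=I(\mathcal{G};\mathbf{y})-I(\mathcal{G};\mathbf{y}\mid\mathbf{z})$. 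With these identities in hand the middle inequality reduces to $I(\mathcal{G};\mathbf{y}\mid k,\mathbf{z})\leq\min\left(I(\mathcal{G};\mathbf{y}\mid k),I(\mathcal{G};\mathbf{y}\mid\mathbf{z})\right)$, which the paper gets from the chain rule. By contrast, your step of inserting $\mathbf{v}_{\mathrm{joint}}$ into a Markov chain $\mathbf{y}\to k\to\mathbf{v}_{\mathrm{local}}$ does not work as stated: $\mathbf{v}_{\mathrm{local}}$ is a deterministic function of $\mathcal{G}$, not of $k$, so no such chain holds, and sufficiency alone gives only one-sided DPI bounds rather than the equalities the argument needs.

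On the last link, your diagnosis is correct, and it applies to the paper's own proof as well: given the identities above, $\max\left(I(\mathbf{v}_{\mathrm{local}};\mathbf{y}),I(\mathbf{v}_{\mathrm{global}};\mathbf{y})\right)\geq I(\mathcal{G};\mathbf{y})-\epsilon$ requires $\min\left(I(\mathcal{G};\mathbf{y}\mid k),I(\mathcal{G};\mathbf{y}\mid\mathbf{z})\right)\leq\epsilon$, whereas Assumption~\ref{assumption} only bounds the joint residual $I(\mathcal{G};\mathbf{y}\mid\mathbf{z},k)$, which is the smallest of the three quantities. The paper simply invokes the assumption at this point; the step is valid only under the stronger reading that each signal individually retains the task-relevant information up to $\epsilon$ (your option (i)). So the substance of your plan matches the paper's; the concrete repairs are to add the minimality argument so the exact identities become available, to replace the faulty Markov-chain insertion by the chain-rule comparison of conditional residuals, and to state explicitly the strengthened assumption needed for the rightmost inequality.
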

\vspace{-1em}
Theorem~\ref{the:bound} shows that the gap of task-relevant information between supervised representation $\mathbf{v}_{sup}=\arg \max_{\mathbf{v}} I(\mathbf{v},\mathbf{y})$ and self-supervised representation $\mathbf{v}_{\mathrm{joint}}$ is $\epsilon$. Thus, we can guarantee a good downstream performance as long as the Assumption~\ref{assumption} is satisfied. It is noteworthy that jointly utilizing local structure and global semantics as the self-supervised signal is expected to contain more task information. As further enlightenment, we can relate  Eq.~\eqref{Eq:proposition} with the Bayes error rate~\cite{tsai2020demystifying}:
\begin{corollary}\label{the:bayes}
Suppose that downstream label $\mathbf{y}$ is a M-categorical random variable. Then we have the
upper bound for the downstream Bayes errors ${P}^{e}_{\mathbf{v}}=\mathbb{E}_{\mathbf{v}}\left[1-\max _{y \in \mathbf{y}} P(\hat{\mathbf{y}}=y| \mathbf{v})\right]$ on learned representation $\mathbf{v}$, where $\hat{\mathbf{y}}$ is the estimation for label from our downstream classifier:
\begin{align}
    \operatorname{Th}({P}_{\mathbf{v}_{joint}}^{e}) \leq \log 2+P_{\mathbf{v }_{\mathrm{sup}}}^{e} \cdot \log M+I(\mathcal{G} ; \mathbf{y}| \mathbf{z},k) \triangleq {\mathrm{RHS}}_{\mathbf{v}_{joint}},
\end{align}
where $\operatorname{Th}(x)=\min \{\max \{x, 0\}, 1-1 /|M|\}$ is a thresholded operation~\cite{tsai2020demystifying}. Similarly, we can obtain the  error upper bound of other representations $\mathbf{v}_{\mathrm{local}}$ and $\mathbf{v}_{\mathrm{global}}:$ ${\mathrm{RHS}}_{\mathbf{v}_{local}}$and ${\mathrm{RHS}}_{\mathbf{v}_{global}}$. Then, we have   inequalities on error upper bounds $: {\mathrm{RHS}}_{\mathbf{v}_{joint}}\leq \min ({\mathrm{RHS}}_{\mathbf{v}_{local}},{\mathrm{RHS}}_{\mathbf{v}_{global}})$.
\end{corollary}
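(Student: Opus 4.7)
The plan is to combine two ingredients: (i) the standard Fano / Hellman--Raviv style inequality from \cite{tsai2020demystifying} that upper-bounds the thresholded Bayes error by the conditional entropy $H(\mathbf{y}\mid\mathbf{v})$, and (ii) the mutual-information bounds of Theorem~\ref{the:bound}, which tell us how much task-relevant information each of $\mathbf{v}_{\mathrm{joint}}, \mathbf{v}_{\mathrm{local}}, \mathbf{v}_{\mathrm{global}}$ retains relative to the supervised representation $\mathbf{v}_{\mathrm{sup}}$. The structure of the argument is to first derive the single bound on $\operatorname{Th}(P^e_{\mathbf{v}_{\mathrm{joint}}})$, and then to show that the only term varying across the three RHS expressions is a conditional mutual information, which is smallest in the joint case.

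To get the main inequality, I would start from the generic bound $\operatorname{Th}(P^e_{\mathbf{v}}) \leq H(\mathbf{y}\mid\mathbf{v})$ of \cite{tsai2020demystifying} applied to $\mathbf{v} = \mathbf{v}_{\mathrm{joint}}$, then decompose
\[
H(\mathbf{y}\mid\mathbf{v}_{\mathrm{joint}}) \;=\; H(\mathbf{y}) - I(\mathbf{v}_{\mathrm{joint}};\mathbf{y}) \;=\; I(\mathbf{v}_{\mathrm{sup}};\mathbf{y}) + H(\mathbf{y}\mid\mathbf{v}_{\mathrm{sup}}) - I(\mathbf{v}_{\mathrm{joint}};\mathbf{y}),
\]
using $H(\mathbf{y}) = H(\mathbf{y}\mid\mathbf{v}_{\mathrm{sup}}) + I(\mathbf{v}_{\mathrm{sup}};\mathbf{y})$ together with $I(\mathbf{v}_{\mathrm{sup}};\mathbf{y}) = I(\mathcal{G};\mathbf{y})$ from the definition of $\mathbf{v}_{\mathrm{sup}}$ as the argmax. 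Fano's inequality applied to the supervised Bayes classifier yields $H(\mathbf{y}\mid\mathbf{v}_{\mathrm{sup}}) \leq \log 2 + P^e_{\mathbf{v}_{\mathrm{sup}}} \log M$, and Theorem~\ref{the:bound} in combination with Assumption~\ref{assumption} gives $I(\mathcal{G};\mathbf{y}) - I(\mathbf{v}_{\mathrm{joint}};\mathbf{y}) \leq I(\mathcal{G};\mathbf{y}\mid\mathbf{z},k)$. Summing these two upper bounds reproduces the claimed $\mathrm{RHS}_{\mathbf{v}_{\mathrm{joint}}}$. Replaying the same calculation with $\mathbf{v}_{\mathrm{local}}$ and $\mathbf{v}_{\mathrm{global}}$ in place of $\mathbf{v}_{\mathrm{joint}}$, and using the analogous information gaps $I(\mathcal{G};\mathbf{y}\mid k)$ and $I(\mathcal{G};\mathbf{y}\mid\mathbf{z})$ respectively, yields $\mathrm{RHS}_{\mathbf{v}_{\mathrm{local}}}$ and $\mathrm{RHS}_{\mathbf{v}_{\mathrm{global}}}$.

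For the final comparison $\mathrm{RHS}_{\mathbf{v}_{\mathrm{joint}}} \leq \min(\mathrm{RHS}_{\mathbf{v}_{\mathrm{local}}}, \mathrm{RHS}_{\mathbf{v}_{\mathrm{global}}})$, the two shared additive terms $\log 2 + P^e_{\mathbf{v}_{\mathrm{sup}}} \log M$ cancel, so it suffices to show $I(\mathcal{G};\mathbf{y}\mid\mathbf{z},k) \leq \min\bigl(I(\mathcal{G};\mathbf{y}\mid k),\, I(\mathcal{G};\mathbf{y}\mid\mathbf{z})\bigr)$. Because $\mathbf{z}$ and $k$ are both deterministic functions of $\mathcal{G}$, we have $H(\mathbf{y}\mid\mathcal{G},\mathbf{z},k) = H(\mathbf{y}\mid\mathcal{G})$, so these two desired inequalities reduce to $H(\mathbf{y}\mid\mathbf{z},k) \leq H(\mathbf{y}\mid\mathbf{z})$ and $H(\mathbf{y}\mid\mathbf{z},k) \leq H(\mathbf{y}\mid k)$, both instances of the ``conditioning reduces entropy'' identity.

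The main obstacle I anticipate is step~(i): the paper defers to \cite{tsai2020demystifying} for the precise functional form of the thresholded Bayes-error bound, and some care is needed to keep the constants tight. A loose Feder--Merhav style bound would only give $P^e_{\mathbf{v}} \leq (1 - 1/\log M)\,H(\mathbf{y}\mid\mathbf{v})$, whereas the RHS stated here collapses two separate uses of Fano (one in the supervised direction, one to re-express $P^e_{\mathbf{v}_{\mathrm{joint}}}$ through a conditional entropy) into the single summand $\log 2 + P^e_{\mathbf{v}_{\mathrm{sup}}}\log M$. Handling the $\operatorname{Th}(\cdot)$ truncation carefully and lining up these constants to match the thresholded bound of \cite{tsai2020demystifying} is the delicate part; once that is in place, the remaining information-theoretic manipulations and the chain of conditioning inequalities are routine.
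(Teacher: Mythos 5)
Your proposal matches the paper's proof essentially step for step: the paper also starts from $\operatorname{Th}(P^e_{\mathbf{v}_{\mathrm{joint}}})\leq H(\mathbf{y}\mid\mathbf{v}_{\mathrm{joint}})$ and $H(\mathbf{y}\mid\mathbf{v}_{\mathrm{sup}})\leq\log 2+P^e_{\mathbf{v}_{\mathrm{sup}}}\log M$ (both imported from the cited reference), rewrites $H(\mathbf{y}\mid\mathbf{v}_{\mathrm{joint}})=H(\mathbf{y}\mid\mathbf{v}_{\mathrm{sup}})+I(\mathcal{G};\mathbf{y}\mid\mathbf{z},k)$ via the identity established in the proof of Theorem~\ref{the:bound}, and closes with $I(\mathcal{G};\mathbf{y}\mid\mathbf{z},k)\leq\min\bigl(I(\mathcal{G};\mathbf{y}\mid k),I(\mathcal{G};\mathbf{y}\mid\mathbf{z})\bigr)$. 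Your justification of that last step via conditioning-reduces-entropy under determinism is an equivalent rephrasing of the paper's chain-rule argument, so the proposal is correct and follows the same route.
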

\vspace{-0.5em}
This corollary says that our self-supervised signal has a tighter upper bound on the downstream Bayes error. Thus, we can expect that the representation learned by our objective function, which utilizes both local structure and global semantic information, has superior performance on downstream tasks.

\section{Experiments}
\label{sec:exp}
In this section, we empirically evaluate the proposed self-supervised learning method on several real-world graph datasets and analyze its behavior on graphs to gain further insights.

\subsection{Experimental Setup}
\textbf{Datasets.} We perform experiments on widely-used homophilic graph datasets: Cora, Citeseer, and Pubmed~\cite{sen2008collective}, as well as non-homophilic datasets: Texas, Cornell, Wisconsin~\cite{pei2019geom}, Penn94 and Twitch. Penn94 and Twitch are two relatively large non-homophilous graph datasets proposed by~\cite{rozemberczki2021multi,lim2021large}. We provide the detailed descriptions, statistics, and homophily measures of datasets in Appendix~\ref{app:dataset}.

\textbf{Baselines.} To evaluate the effectiveness of DSSL, we consider the following representative  unsupervised and self-supervised  learning methods for the node representation task, including Deepwalk~\cite{perozzi2014deepwalk}, LINE~\cite{tang2015line}, Struc2vec~\cite{ribeiro2017struc2vec}, GAE~\cite{kipf2016variational}, VGAE~\cite{kipf2016variational},  DGI~\cite{velivckovic2018deep}, GraphCL~\cite{you2020graph}, MVGRL~\cite{hassani2020contrastive} and BGRL~\cite{thakoor2021large}. The detailed description of baselines and implementations are given in Appendix~\ref{app:setup}.

\textbf{Evaluation Protocol.} 
We consider two types of downstream tasks: node classification and node clustering. For node classification, we follow the standard linear-evaluation protocol on graphs~\cite{velivckovic2018deep,thakoor2021large}, where a linear classifier is trained on top of the frozen representation, and test accuracy (ACC) is used as a proxy for representation quality. For all datasets, we adopt the similar random split with a train/validation/test split ratio of $60\%/20\%/20\%$ for the training of downstream linear classifier following~\cite{pei2019geom,lim2021large}. For the node clustering task, we perform $K$-means clustering on the obtained representations and set the number of clusters to the number of classes. We utilize the normalized mutual information (NMI)~\cite{vinh2010information} as the evaluation metric for clustering. 

\textbf{Setup.} For all self-supervised methods on graph neural networks, we consider a two-layer GCN~\cite{DBLP:conf/iclr/KipfW17} as the encoder unless otherwise stated, and  randomly initialize  parameters. We run experiments with 10 random splits and report the average performance. We select the best configuration of hyper-parameters based on accuracy on the validation. The detailed settings are given in Appendix~\ref{app:setup}.

\begin{table}[t]
\vspace{-0.05in}
\centering
\caption{
Experimental results (\%) with standard deviations on the node classification task. The best and second best
performance under each dataset are marked with boldface and underline, respectively.}\label{table:classification}
\vspace{0.02in}
% \resizebox{\textwidth}{!}{
\scalebox{0.83}{
\begin{tabular}{ccccccccc}
\toprule
 \textbf{Method}       & \textbf{Cora}      & \textbf{Citeseer}    & \textbf{Pubmed}   & \textbf{Texas}    & \textbf{Cornell}      & \textbf{Squirrel}     & \textbf{Penn94} & \textbf{Twitch} \\
\midrule
Deepwalk       &       \facc{77.14}{0.82}   &    \facc{67.85}{0.79} &     \facc{79.38}{1.22} &     \facc{42.31}{2.21}  &     \facc{41.55}{3.12}  &     \facc{37.54}{2.19}  &    \facc{56.13}{0.46} & \facc{66.37}{0.11}  \\
   LINE        & \facc{78.93}{0.55} & \facc{68.79}{0.41} & \facc{80.56}{0.92} & \facc{48.69}{1.39} & \facc{43.68}{2.17} & \underline{\facc{38.92}{1.58}} & \facc{57.59}{0.17} & \facc{67.23}{0.27} \\ 
   Struc2vec        & \facc{30.26}{1.52} & \facc{53.38}{0.62} & \facc{40.83}{1.85} & \facc{49.31}{3.22} & \facc{30.22}{5.87} & \facc{36.49}{1.15} & \facc{50.29}{0.31} & \facc{63.52}{0.35} \\
\midrule
  GAE        &  \facc{78.33}{0.27} & \facc{66.39}{0.24} & \facc{78.28}{0.77} &  {\facc{53.98}{3.22}}  & \facc{44.18}{3.56} & \facc{30.53}{1.33}  & \facc{58.11}{0.18} & \facc{67.98}{0.27}  \\ 
  VGAE        & \facc{80.59}{0.35} & \facc{69.90}{0.57} & \facc{81.33}{0.69} & \facc{50.27}{2.21} & \facc{48.73}{4.19} & \facc{29.13}{1.16} & \facc{58.29}{0.21} & \facc{65.09}{0.08}\\ 
  DGI        & \facc{84.17}{1.35} & \facc{71.80}{1.33} & \facc{81.65}{0.71} &  \underline{\facc{58.53}{2.98}} & \facc{45.33}{6.11} & \facc{26.44}{1.12} & \facc{53.68}{0.19} & \facc{66.97}{0.25}  \\ 
  \midrule
 GraphCL        & \underline{\facc{84.28}{0.91}} & \underline{\facc{72.46}{1.79}}  & \facc{81.96}{0.73} & \facc{48.67}{4.37} &  \facc{47.22}{4.50} & \facc{22.53}{0.98} & \facc{58.43}{0.31} & \underline{\facc{68.37}{0.16}} \\ 
  MVGRL        & \bfacc{85.21}{1.18} & \facc{72.13}{1.04}  & \underline{\facc{82.33}{0.88}}  & \facc{51.26}{0.38} & \underline{\facc{51.16}{1.67}} &  \facc{38.43}{0.87}  & \facc{57.22}{0.17} & \facc{66.03}{0.26}  \\ 
 BGRL        & \facc{83.29}{0.72}  & \facc{71.56}{0.87}  & \facc{81.34}{0.50}  & \facc{52.77}{1.98} & {\facc{50.33}{2.29}} & \facc{36.22}{1.97} & \underline{\facc{58.98}{0.13}} &  \facc{67.43}{0.22} \\ 
\midrule
DSSL  & \facc{83.06}{0.53} &  \bfacc{73.51}{0.64} &  \bfacc{82.98}{0.49} &  \bfacc{62.11}{1.53} &   \bfacc{53.15}{1.28}  &  \bfacc{40.51}{0.38} &  \bfacc{60.38}{0.32}  &    \bfacc{69.81}{0.17} \\ 
\bottomrule
\end{tabular}}
\vspace{-0.1in}
\end{table}

\subsection{Overall Performance Comparison}
In this section, we conduct experiments on real-world graphs compared to state-of-the-art methods. Table~\ref{table:classification} reports the average classification accuracy with the standard deviation on node classification after ten runs. Since node clustering results have a similar tendency, we provide them in Appendix~\ref{app:node-clutering}. From Table~\ref{table:classification}, we have the following observations: (1) Generally, our DSSL achieves the best performance on both
node clustering and classification tasks over the best baseline, demonstrating the effectiveness of DSSL on both homophilous and non-homophilous graphs and the robustness to different downstream tasks and graphs. Especially, DSSL achieves a relative improvement of over 6\% and 4\% on Texas and Squirrel compared to the best baselines  (2) Our DSSL cannot achieve the best performance on Cora. This is reasonable since Cora is highly homophilous (see Appendix~\ref{app:dataset}), and the core design of augmentation in contrastive learning methods such as MVGRL and GraphCL enables them to be effective on homophilous graphs but failed on low-homophily settings. This supports our motivation in the introduction that it is relatively difficult to design effective graph augmentation on non-homophilous graphs due to their heterogeneous and diverse patterns. (3) When compared with GAE, VGAE, and DGI,  DSSL consistently and significantly outperforms them. We deem that this improvement is mainly from the joint local semantic shift and global semantic clustering in DSSL, which is not included in existing methods.

\begin{figure*}[t]
	\centering
	\includegraphics[width=1.01\linewidth]{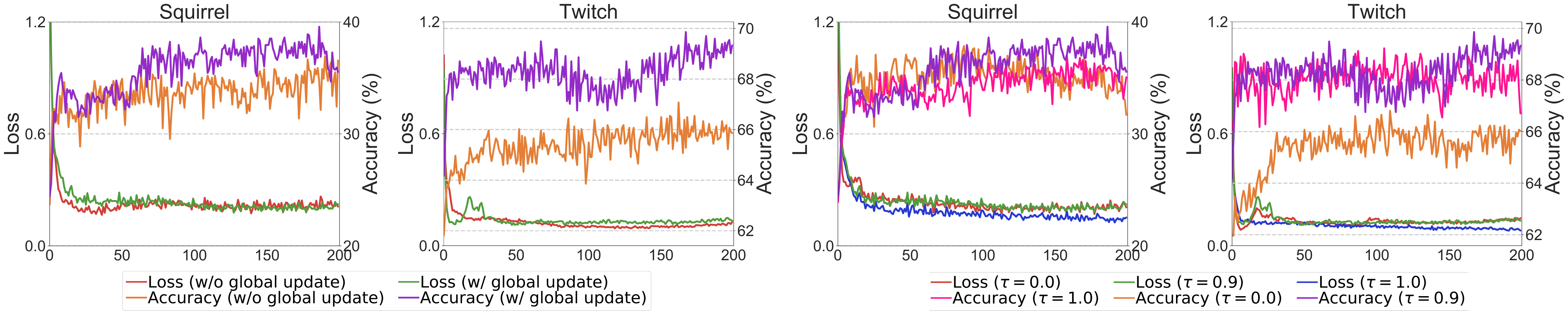}
	\caption{(Left) Performance w/ and w/o global update.
	(Right) Performance with varying $\tau$.}
	\label{fig:curves}
	\vskip -1.5em
\end{figure*}

\begin{wraptable}[15]{RT}{.4\linewidth}
		\small
		\centering
		\vskip -1.5em
		\caption{\small Node classification accuracy ($\%$) on the Texas dataset.}\label{tab_abl}
		\vskip -0.5em
		\resizebox{\linewidth}{!}{
			\begin{tabular}{l c }
			  \toprule[1.0pt]
			   		\textbf{Ablation}& \textbf{Accuracy}\\
				\toprule
				\textbf{A1}  w/o local loss  $\mathcal{L}_{local}$ & \facc{25.18}{1.31}   \\
			    \textbf{A2} w/o global loss $\mathcal{L}_{global}$& \facc{59.34}{2.76} \\
				\textbf{A3} w/o entropy loss & \facc{60.57}{1.27}\\
				\textbf{A4} {w/o semantic shift}& \facc{56.19}{1.25} \\
				\textbf{A5} {w/ uniform posterior}  & \facc{50.27}{1.04}  \\
				 \textbf{A2+A3}  & \facc{57.61}{1.72}  \\
				  \textbf{A2+A4}  & \facc{50.52}{2.01}  \\
				  	\textbf{A2+A5}  & \facc{48.59}{1.87}  \\
				  		 	\toprule
				  		   	\text{DGI}  & \facc{58.53}{2.98}  \\
				  	\text{DSSL}  & \bfacc{62.11}{1.53}  \\

				  \toprule[1.0pt]
			\end{tabular}
		}
	\end{wraptable}

\begin{figure*}[h]
\vskip -1em
	\centering
	\includegraphics[width=1.0\linewidth]{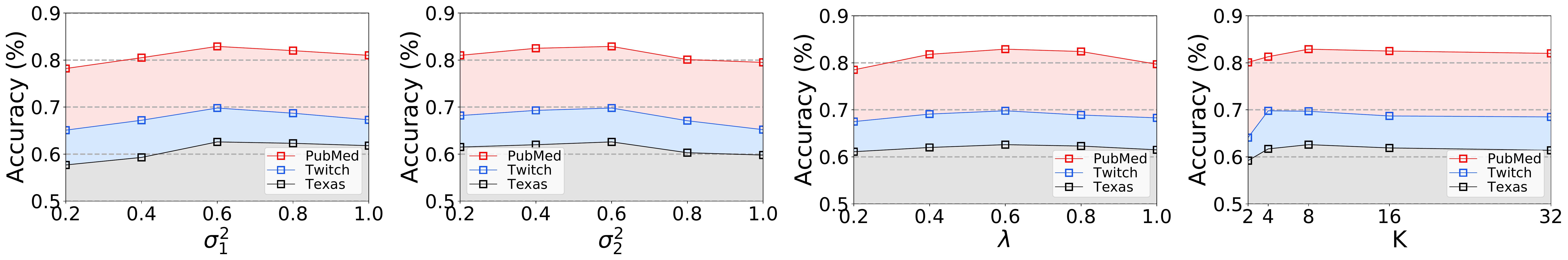}
	\vskip -1em
	\caption{Hyper-parameter analysis on PubMed, Twitch and Texas datasets.}\label{fig:parameters}
	\vskip -1em
\end{figure*}

\subsection{Ablation Study and Parameter Analysis}
\label{main:ab}
In this section, we conduct an ablation study and investigate the sensitivity of hyper-parameters.

\textbf{Ablation Study.}
We consider the following ablations: \textbf{(A1)} We remove the key component of DSSL: the local reconstruction loss (\texttt{w/o $\mathcal{L}_{local}$}). \textbf{(A2)} We remove global clustering loss  to see if $\mathcal{L}_{local}$ alone is still effective (\texttt{w/o $\mathcal{L}_{global}$}). \textbf{(A3)} We remove the entropy term in Eq.~\eqref{loss} (\texttt{w/o entropy}). \textbf{(A4)} We remove the semantic shift term $g_{\theta}(k)$ in  $\mathcal{L}_{local}$ (\texttt{w/o semantic shift}). \textbf{(A5)} We set posterior $q_{\phi}(k|\mathbf{v}_{i},\mathbf{z}_{j})={1}/{K}$ as uniform distribution for each link (\texttt{w/ uniform posterior}). We show the ablation study results in Table~\ref{tab_abl}. $\mathcal{L}_{global}$ alone does not provide much discriminative information, and it does not perform very well. $\mathcal{L}_{local}$ as a key component of DSSL alone produces better results than the DGI baseline.
We can also observe that considering semantic shift and personalized posterior can further improve the performance a lot which demonstrates our key motivations. The full model (last row) achieves the best performance, which illustrates that different components in the proposed DSSL are complementary to each other.

\textbf{Effect of Global Update on Prototypes}. 
We conduct ablation studies to gain insights into the global updating of prototypes. As shown in Figure~\ref{fig:curves}, we can observe that without Eq.~\eqref{Eq:global}, the performance is not very good, and we are stuck in bad local optima. As discussed above, a possible reason is that we will experience strong degeneracy if we only update prototype vectors with mini-batch training. This figure also demonstrates that DSSL can converge within a few hundred steps, which is efficient.

\textbf{Effect of Asymmetric Architecture}. 
We then explore the effect of target decay rate $\tau$ on the performance. Figure~\ref{fig:curves} shows the learning curves of DSSL on Squirrel and Twitch. We can observe that the best result is achieved at $\tau=0.9$. When $\tau=1.0$, i.e., the target network is never updated, DSSL obtains a competitive result but is lower than $\tau=0.9$. This confirms that slowly updating the target network is crucial in obtaining superior performance. At the other extreme value $\tau=0$, the target network is the same as the online network, and DSSL demonstrates a degenerated performance.

\textbf{Hyper-parameters Analysis.} 
We investigate the hyper-parameters most essential to our framework design, i.e., the standard deviation $\sigma_{1}$ and $\sigma_{2}$,  the temperature of the Gumbel Softmax $\gamma$, and the total number of factors $K$. The corresponding results are shown in Figure~\ref{fig:parameters}. 
$\sigma_1^{2}$ resembles the temperature scaling in Eq.~\eqref{loss}.
We observe $\sigma_1^{2}$
is better to be selected from $0.6$ to $1.0$, and
a too small (e.g., $0.2$) value may degenerate the performance in all three datasets. $\sigma_2^{2}$ balances the local and global loss in Eq.~\eqref{loss}. We can find that having large values of $\sigma_2^{2}$ does not improve the performance, as the local loss plays an essential role in the proposed model. Further, we observe that DSSL is not very sensitive to the Gumbel softmax temperature $\lambda$, while a moderate hardness of the Softmax gives the best results. We also find that as $K$ increases from $2$ to $8$, the performance of DGCL improves, which suggests the importance of decoupling latent factors. However, training with large $K$ will lead to a performance slightly drop. We provide results on other datasets in Appendix~\ref{app:ab}.

\begin{table}[t]
\centering
\caption{
Experimental results (\%)  on the node classification task with GAT. The best and second best performances under each dataset are marked with boldface and underline, respectively.}\label{table:gat}
\vspace{0.02in}
% \resizebox{\textwidth}{!}{
\scalebox{0.83}{
\begin{tabular}{ccccccccc}
\toprule
 \textbf{Method}       & \textbf{Cora}      & \textbf{Citeseer}    & \textbf{Pubmed}   & \textbf{Texas}    & \textbf{Cornell}      & \textbf{Squirrel}     & \textbf{Penn94} & \textbf{Twitch} \\
\midrule
% LINE  &  \facc{82.5}{0.46}  &  \bfacc{73.67}{0.68} &  \bfacc{81.69}{0.37} &  \bfacc{59.73}{0.68} &  \bfacc{53.63}{1.16}  &   \bfacc{39.42}{1.34} &  \bfacc{58.97}{0.26}  &   \bfacc{69.55}{0.48} \\ 
MVGRL  &  \bfacc{84.52}{0.95}  &  \facc{71.52}{0.41} &  \underline{\facc{81.05}{0.68}} &  \underline{\facc{53.42}{0.29}} &  \underline{\facc{50.29}{0.96}}  &   \underline{\facc{37.58}{0.95}} &  \underline{\facc{57.21}{0.30}}  &   \underline{\facc{67.11}{0.23}} \\ 
BGRL  &  \facc{83.91}{0.25}  &  \underline{\facc{72.15}{0.42}} &  \facc{80.12}{0.52} &  \facc{51.02}{1.10} &  \facc{48.97}{1.03}  &   \facc{35.33}{1.12} &  \facc{56.52}{0.25}  &   \facc{66.21}{0.33} \\ 
DSSL  &  \facc{82.54}{0.46}  &   \bfacc{73.67}{0.68} &  \bfacc{81.69}{0.37} &   \bfacc{59.73}{1.28} &   \bfacc{53.63}{1.16}  &    \bfacc{39.42}{1.34} &   \bfacc{58.97}{0.26}  &   \bfacc{69.55}{0.48} \\ 
\bottomrule
\end{tabular}}
\vspace{-0.1in}
\end{table}

\textbf{Encoders Analysis.} 
To further evaluate the effectiveness of the proposed DSSL, we consider the case where the self-supervised learning methods are implemented using another GNN encoder. Table~\ref{table:gat} shows the results of the selected baseline with GAT as the encoder. As shown in Table~\ref{table:gat}, our DSSL performs better than all the compared methods in most cases, which once again proves the effectiveness of DSSL, and also shows that  DSSL is broadly applicable to various GNN encoders.

\begin{figure*}[t]
	\centering
	\includegraphics[width=1.01\linewidth]{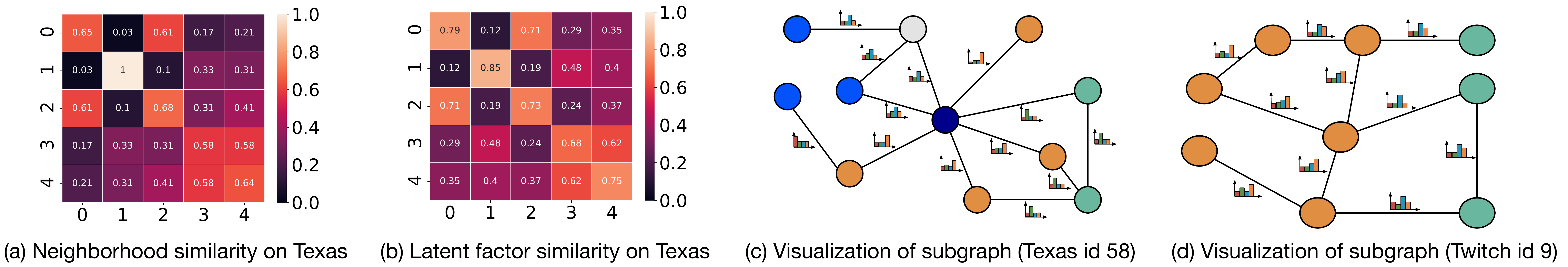}
	\vskip -0.5em
	\caption{The visualization and case study results (best
viewed on a computer screen and note that the latent distribution of each link need
to be zoomed in to be better visible).}\label{fig:visual}
\vskip -2em
\end{figure*}

\begin{figure*}[t]
	\centering
	\includegraphics[width=1.01\linewidth]{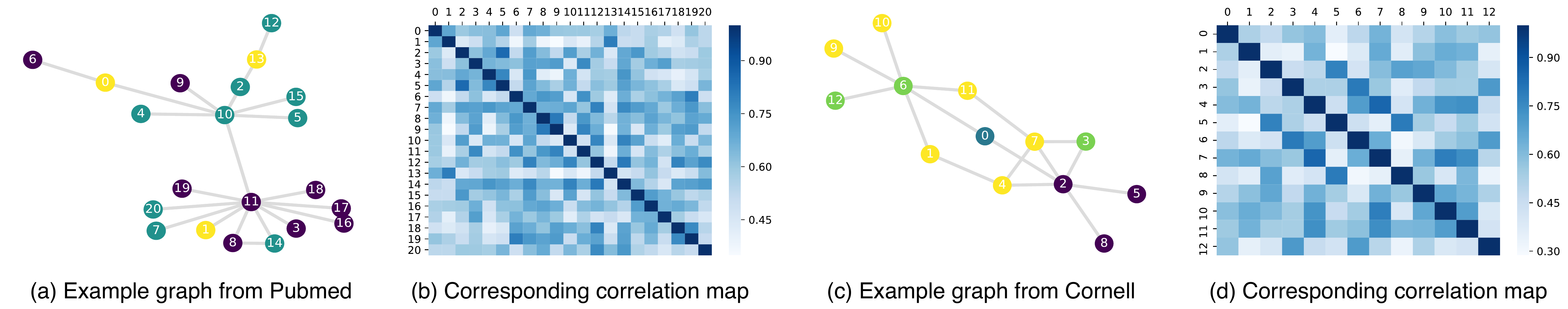}
	\vskip -0.5em
	\caption{Example graphs and correlation maps. The graphs are randomly sampled from Pubmed and Cornell. The correlation maps are obtained based on the pair-wise similarities of learned posteriors.}\label{fig:long-pub-cornell}
\vskip -2em
\end{figure*}

\subsection{Visualization and Case Study}\label{vis-main}
\textbf{Local neighborhood Patterns}. We provide the visualization and case study results to understand how DSSL uncovers the latent patterns in local neighborhoods. More results can be found in Appendix~\ref{app:vis-cs}. In Figure~\ref{fig:visual} (a), we calculate the cross-class neighborhood similarity (see Appendix~\ref{cross-neigh} for definition), which serves as the ground truth. If nodes with the same label share the same neighborhood distributions, the intra-class similarity should be high. In Figure~\ref{fig:visual} (b), we calculate average posterior distribution $q_{\phi}(k|\mathbf{v}_{i})$ over all nodes on each class and provide cosine similarity of them. We can observe that our learned distribution shares a similar pattern with the cross-class neighborhood similarity, which shows that learned latent factors can capture the latent semantic information related to neighborhood patterns. In Figures~\ref{fig:visual} (c) and (d), we plot the subgraph of a randomly selected  node, and the distribution $q_{\phi}(k|\mathbf{v}_{i},\mathbf{z}_{j})$.  We use different colors to indicate different labels. We find that similar neighbors over class generally have a similar distribution, while different types of links exhibit different latent distributions. This observation matches our motivation that  DSSL can decouple the diverse semantics in the local neighborhoods. 

\textbf{Global Semantic Patterns.}
We investigate how our learned semantic clusters perform on the long-range nodes. To show this, we study the learned cluster distribution $q_{\phi}(k|\mathbf{v}_{i})$ for each node. To facilitate visualization, we randomly sample a sub-graph that contains high-hop neighborhoods from each dataset and use different colors to indicate different node labels. We then calculate the pair-wise similarity of the posteriors $q_{\phi}(k|\mathbf{v}_{i})$ between nodes as shown in Figure~\ref{fig:long-pub-cornell}. We can observe that our learned semantic clusters exhibit similar patterns for the nodes in the same class. Some nodes exhibit similar semantic clusters regardless of the distance between them. For instance, node ID $6$ exhibits a very similar posterior distribution to the same class node ID $19$ in Pubmed, and node ID $3$ exhibits similar posterior distribution to node ID $12$ in Cornell, despite these nodes being ID $4$ hops away. Since our DSSL seeks to pull the node representation to the inferred semantic clusters, we can learn the global node-node relationships in representations instead of favoring nearby nodes.

\section{Conclusions}
\label{sec:conclusion}
In this paper, we study the problem of conducting self-supervised learning on non-homophilous graphs data. We present a novel decoupled self-supervised learning (DSSL) framework to decouple the diverse neighborhood context of a node in an unsupervised manner. Specifically, DSSL imitates the generative process of neighbors and explicitly models unobserved factors by latent variables. We show that DSSL can simultaneously capture the global clustering information and the local structure roles with the semantic shift. We theoretically show that  DSSL enjoys a good downstream performance. Extensive experiments on several graph datasets validated the superiority of DSSL and showed that DSSL could learn meaningful class-discriminative representations.

\section*{Acknowledgments}
This work was partially supported by the National Science Foundation (NSF) under grants number IIS-1909702 and IIS-1955851, Army Research Office (ARO) under grant number W911NF-21-1-0198 and Department of Homeland Security under grant number E205949D. The findings and conclusions in this paper do not necessarily reflect the view of the funding agency.

\bibliography{reference}
\bibliographystyle{plainnat}

\section*{Checklist}
%% BEGIN INSTRUCTIONS %%%

\begin{enumerate}
\item For all authors...
\begin{enumerate}
 \item Do the main claims made in the abstract and introduction accurately reflect the paper's contributions and scope?
    \answerYes{}
 \item Did you describe the limitations of your work?
    \answerYes{}
 \item Did you discuss any potential negative societal impacts of your work?
    \answerYes{}
 \item Have you read the ethics review guidelines and ensured that your paper conforms to them?
    \answerYes{}
\end{enumerate}

\item If you are including theoretical results...
\begin{enumerate}
 \item Did you state the full set of assumptions of all theoretical results?
    \answerYes{}
        \item Did you include complete proofs of all theoretical results?
    \answerYes{}
\end{enumerate}

\item If you ran experiments...
\begin{enumerate}
 \item Did you include the code, data, and instructions needed to reproduce the main experimental results (either in the supplemental material or as a URL)?
      \answerYes{}
 \item Did you specify all the training details (e.g., data splits, hyperparameters, how they were chosen)?
    \answerYes{}
        \item Did you report error bars (e.g., with respect to the random seed after running experiments multiple times)?
    \answerYes{} 
        \item Did you include the total amount of compute and the type of resources used (e.g., type of GPUs, internal cluster, or cloud provider)?
        \answerNA{}
\end{enumerate}

\item If you are using existing assets (e.g., code, data, models) or curating/releasing new assets...
\begin{enumerate}
 \item If your work uses existing assets, did you cite the creators?
        \answerNA{}
 \item Did you mention the license of the assets?
        \answerNA{}
 \item Did you include any new assets either in the supplemental material or as a URL? \answerNA{}
 \item Did you discuss whether and how consent was obtained from people whose data you're using/curating?
        \answerNA{}
 \item Did you discuss whether the data you are using/curating contains personally identifiable information or offensive content?
        \answerNA{}
\end{enumerate}

\item If you used crowdsourcing or conducted research with human subjects...
\begin{enumerate}
 \item Did you include the full text of instructions given to participants and screenshots, if applicable?
    \answerNA{}
 \item Did you describe any potential participant risks, with links to Institutional Review Board (IRB) approvals, if applicable?
    \answerNA{}
 \item Did you include the estimated hourly wage paid to participants and the total amount spent on participant compensation?
    \answerNA{}
\end{enumerate}

\end{enumerate}

\newpage
\appendix
\onecolumn
\section{The Omitted Derivations}
\label{app:derivation}
\subsection{The Evidence Lower-Bound}
\label{app:ELBO}
In this section, we provide the details of the lower-bound in Eq.~\eqref{ELBO}. By introducing the approximated posterior $q_{\phi}(k|\mathbf{v}_{i},\mathbf{z}_{j})$, the likelihood $\mathcal{L}_{DSSL}(\theta,\boldsymbol{\mu})$ in Eq.~\eqref{generative} becomes:
\begin{linenomath}
\begin{align}
    \mathcal{L}_{DSSL}(\theta,\boldsymbol{\mu})&=\frac{1}{|{N(i)}|} \sum_{j\in N(i)} \log \Big[\sum_{k=1}^{K} p_{\theta}(\mathbf{z}_{j}|\mathbf{v}_{i},k)p_{\theta}(k|\mathbf{v}_{i})\Big] \nonumber \\
   & =\frac{1}{|{N(i)}|} \sum_{j\in N(i)} \log \Big[\sum_{k=1}^{K} p_{\theta}(\mathbf{z}_{j}|\mathbf{v}_{i},k)p_{\theta}(k|\mathbf{v}_{i})\frac{q_{\phi}(k|\mathbf{v}_{i},\mathbf{z}_{j})}{q_{\phi}(k|\mathbf{v}_{i},\mathbf{z}_{j})}\Big]  \\ 
    & \geq  \frac{1}{|{N(i)}|} \sum_{j\in N(i)}\sum_{k=1}^{K}{q_{\phi}(k| \mathbf{v}_{i}, \mathbf{z}_{j})}[\log p_{\theta}(\mathbf{z}_{j}| \mathbf{v}_{i}, k )+\log p_{\theta}(k| \mathbf{v}_{i})-\log q_{\phi}(k| \mathbf{v}_{i}, \mathbf{z}_{j})]  \nonumber \\
    & = \frac{1}{|{N(i)}|} \sum_{j\in N(i)} \mathbb{E}_{q_{\phi}(k| \mathbf{v}_{i}, \mathbf{z}_{j})}[\log p_{\theta}(\mathbf{z}_{j}| \mathbf{v}_{i}, k )+\log p_{\theta}(k| \mathbf{v}_{i})-\log q_{\phi}(k| \mathbf{v}_{i}, \mathbf{z}_{j})], \nonumber
\end{align}
\end{linenomath}
where the third step uses Jensen’s inequality. 

\subsection{The Optimization Loss}
\label{app:loss}
In this section, we derive the optimization loss for learning parameters in Eq.~\eqref{loss}.  Introducing the posterior  in Eq.~\eqref{Eq:posterior} and probabilities specification  $p_{\theta}(\mathbf{z}_{j}|\mathbf{v}_{i},k)=\mathcal{N}(\mathbf{z}_{j}; \boldsymbol{\mu}_{z_j},\boldsymbol{\Sigma}_{j})$  to the negative of the evidence lower-bounded in Eq.~\eqref{ELBO}, we have:
\begin{linenomath}
\begin{align}
\mathcal{L}=-\frac{1}{|{N(i)}|}& \sum_{j\in N(i)} \mathbb{E}_{q_{\phi}(k| \mathbf{v}_{i}, \mathbf{z}_{j})}[\log \mathcal{N}(\mathbf{z}_{j}; \boldsymbol{\mu}_{z_j},\boldsymbol{\Sigma}_{j})-\log \frac{ \mathcal{N}(\mathbf{v}_{i}; \boldsymbol{\mu}_{k},\mathbf{I}\sigma_{1}^{2})p(k)}{\sum_{k^{\prime}=1}^{K} \mathcal{N}(\mathbf{v}_{i}; \boldsymbol{\mu}_{k'},\mathbf{I}\sigma_{1}^{2})p(k')}\nonumber \\
&+\log q_{\phi}(k| \mathbf{v}_{i}, \mathbf{z}_{j})].\label{app:Eqloss}
\end{align}
\end{linenomath}
Since we consider isotropic Gaussian with $\boldsymbol{\Sigma}_{j}=\mathbf{I}\sigma_{2}^{2}$, we have:
\begin{linenomath}
\begin{align}
\mathcal{N}(\mathbf{z}_{j}; \boldsymbol{\mu}_{z_j},\boldsymbol{\Sigma}_{j})&=\frac{1}{(2 \pi)^{D' / 2}\sigma_2^{D'}} \exp (-\frac{1}{2\sigma_2^{2}}(\mathbf{z}_{j}-\mathbf{v}_{i}-\beta g_{\theta}(k))^{T}\cdot (\mathbf{z}_{j}-\mathbf{v}_{i}-\beta g_{\theta}(k))) \nonumber\\
&= \frac{1}{(2 \pi)^{D' / 2}\sigma_2^{D'}} \exp (-\frac{\|\mathbf{v}_{i}+\beta g_{\theta}(k)-\mathbf{z}_{j} \|_{2}^{2}}{2\sigma_{2}^{2}}) \label{app:loss-term1}
\end{align}
\end{linenomath}
Similarly, for $\mathcal{N}\left(\mathbf{v}_{i} ; \boldsymbol{\mu}_{k}, \mathbf{I} \sigma_{1}^{2}\right)$, we have:
\begin{linenomath}
\begin{align}
&\mathcal{N}(\mathbf{v}_{i} ; \boldsymbol{\mu}_{k}, \mathbf{I} \sigma_{1}^{2})=\frac{1}{(2 \pi)^{D' / 2}\sigma_{1}^{D'}} \exp (-\frac{1}{2\sigma_1^{2}}(\mathbf{v}_{i}-\boldsymbol{\mu}_{k})^{T}\cdot (\mathbf{v}_{i}-\boldsymbol{\mu}_{k}))  \\
&=\frac{1}{(2 \pi)^{D' / 2}\sigma_{1}^{D'}} \exp (-\frac{1}{2\sigma_1^{2}}(\mathbf{v}_{i}-\boldsymbol{\mu}_{k})^{T}\cdot (\mathbf{v}_{i}-\boldsymbol{\mu}_{k}))=\frac{1}{(2 \pi)^{D' / 2}\sigma_{1}^{D'}} \exp (\frac{(\mathbf{v}_{i}^{\top} \cdot \boldsymbol{\mu}_{k}-1)}{{\sigma}_{1}^{2}}),\nonumber \label{app:loss-term2}
\end{align}
\end{linenomath}
where the last equivalence holds because we apply  L2  normalization to  $\mathbf{v}_{i}$  and  $\boldsymbol{\mu}_{k}$, respectively. Substituting the corresponding terms in Eq.~\eqref{app:Eqloss} with Eq.~\eqref{app:loss-term1} and Eq.~\eqref{app:loss-term2}, and removing the constant  terms that are irrelevant to model parameters $\{\theta,\phi,\boldsymbol{\mu}\}$, we have:
\begingroup
\allowdisplaybreaks
\begin{linenomath}
\begin{align} 
&\mathcal{L}=\frac{1}{|{N(i)}|} \sum_{j\in N(i)}\frac{1}{2\sigma_2^{2}} \mathbb{E}_{q_{\phi}(k| \mathbf{v}_{i}, \mathbf{z}_{j})}\big[ \|\mathbf{v}_{i}+\beta g_{\theta}(k)-\mathbf{z}_{j}\|_{2}^{2}\big]-\mathcal{H}(q_{\phi}(k| \mathbf{v}_{i},\mathbf{z}_{j}))\nonumber
 \\
&-\frac{1}{|{N(i)}|} \sum_{j\in N(i)} \mathbb{E}_{q_{\phi}(k|\mathbf{v}_{i},\mathbf{z}_{j})}\Big[\log \frac{\exp \left(\boldsymbol{v}_{i}^{\top} \cdot \boldsymbol{\mu}_{k} / {\sigma}_{1}^{2}\right)}{\sum_{k^{\prime}=1}^{K}  \exp \left(\boldsymbol{v}_{i}^{\top} \cdot \boldsymbol{\mu}_{k’} / {\sigma}_{1}^{2}\right)}\Big]=\frac{1}{|{N(i)}|}\sum_{j\in N(i)}\frac{1}{2\sigma_2^{2}}\\
& \mathbb{E}_{q_{\phi}(k| \mathbf{v}_{i}, \mathbf{z}_{j})}\big[ \|\mathbf{v}_{i}+\beta g_{\theta}(k)-\mathbf{z}_{j}\|_{2}^{2}\big]- \mathbb{E}_{q_{\phi}(k|\mathbf{v}_{i})}[\log \frac{\exp (\boldsymbol{v}_{i}^{\top} \cdot \boldsymbol{\mu}_{k} / {\sigma}_{1}^{2})}{\sum_{k^{\prime}=1}^{K}  \exp (\boldsymbol{v}_{i}^{\top} \cdot \boldsymbol{\mu}_{k’} / {\sigma}_{1}^{2})}]-\mathcal{H}(q_{\phi}(k| \mathbf{v}_{i},\mathbf{z}_{j}))\nonumber
\end{align}
\end{linenomath}
\endgroup%
where $q_{\phi}(k|\mathbf{v}_{i})={1}/{|{N(i)}|} \sum\nolimits_{j\in N(i)} q_{\phi}(k| \mathbf{v}_{i}, \mathbf{z}_{j})$. Multiplying the constant $2\sigma_2^{2}$ for all terms and absorbing  $2\sigma_2^{2}$ to the hyper-parameter $\sigma_2^{2}$, we obtain the loss function in Eq.~\eqref{loss}. 
% Note that in the Eq.~\eqref{loss} in the main text, we made a typo: there is no negative sign for the first term.

\subsection{Derivation of Global Update}
\label{app:global}
In this section,  we derive the global analytical update of prototypes. Specifically, we  have:
\begin{linenomath}
\begin{align} 
\boldsymbol{\mu}_{k} \leftarrow \arg \min_{\boldsymbol{\mu}_{k}} \sum_{i=1}^{N} \mathcal{L}&=\arg \max_{\boldsymbol{\mu}_{k}} \sum_{i=1}^{N} \mathbb{E}_{q_{\phi}(k|\mathbf{v}_{i})}\Big[\log \frac{\exp \left(\boldsymbol{v}_{i}^{\top} \cdot \boldsymbol{\mu}_{k} / {\sigma}_{1}^{2}\right)}{\sum_{k^{\prime}=1}^{K}  \exp \left(\boldsymbol{v}_{i}^{\top} \cdot \boldsymbol{\mu}_{k’} / {\sigma}_{1}^{2}\right)}\Big] \nonumber \\
&=\arg \max_{\boldsymbol{\mu}_{k}}\sum_{i=1}^{N} \mathbb{E}_{q_{\phi}(k|\mathbf{v}_{i})} \Big[\log \exp (\boldsymbol{v}_{i}^{\top} \cdot \boldsymbol{\mu}_{k} / {\sigma}_{1}^{2}) \Big] \nonumber \\
&=\arg \max_{\boldsymbol{\mu}_{k}}\sum_{i=1}^{N} \mathbb{E}_{q_{\phi}(k|\mathbf{v}_{i})} \Big[ (\boldsymbol{v}_{i}^{\top} \cdot \boldsymbol{\mu}_{k} / {\sigma}_{1}^{2}) \Big].
\end{align}
\end{linenomath}
In the second line, following~\cite{wu2018unsupervised}, we treat the normalizing term as the constant. If we  apply L2 normalization to $\boldsymbol{\mu}_{k}$: $\|\boldsymbol{\mu}_{k}\|=1$, we need to solve the Lagrangian of the objective function:
\begin{linenomath}
\begin{align} 
\mathcal{L}(\boldsymbol{\mu}_{k},\beta)=\max_{\boldsymbol{\mu_{k}},\beta} \sum_{i=1}^{N} \mathbb{E}_{q_{\phi}(k|\mathbf{v}_{i})} \Big[ (\mathbf{v}_{i}^{\top} \cdot \boldsymbol{\mu}_{k} / {\sigma}_{1}^{2}) \Big]+\beta (1-\boldsymbol{\mu}_{k}^{\top} \boldsymbol{\mu}_{k}).
\end{align}
\end{linenomath}
Take the gradient over $\boldsymbol{\mu}_{k}$ with respect to $\mathcal{L}(\boldsymbol{\mu}_{k},\beta)$ and set it to zero, we have:
\begin{linenomath}
\begin{align} 
&\sum_{i=1}^{N} {q_{\phi}(k|\mathbf{v}_{i})}\mathbf{v}_{i}/ {\sigma}_{1}^{2}-2 \beta \cdot \boldsymbol{\mu}_{k}=0 \nonumber \\
&\Rightarrow \boldsymbol{\mu}_{k}=\frac{\sum_{i=1}^{N}q_{\phi}(k|\mathbf{v}_{i})\cdot\mathbf{v}_{i}}{2\beta \sigma_1^{2}}=\frac{\sum_{i=1}^{N}\pi_{i}(k) \cdot \mathbf{v}_{i}}{2\beta \sigma_1^{2}}, \label{app:eqlagrange-mu}
\end{align}
\end{linenomath}
where $\pi_{i}(k)=q_{\phi}(k|\mathbf{v}_{i})$. By taking the gradient with respect to multiplier $\beta$ and setting it to zero, we have $\boldsymbol{\mu}_{k}^{\top}\boldsymbol{\mu}_{k}=1$. Combining $\boldsymbol{\mu}_{k}^{\top}\boldsymbol{\mu}_{k}=1$ and Eq.~\eqref{app:eqlagrange-mu}, we have:
\begin{linenomath}
\begin{align} 
  \beta=\frac{\|\sum_{i=1}^{N}\pi_{i}(k)\cdot \mathbf{v}_{i}\|}{2\sigma_{1}^{2}}.
\end{align}
\end{linenomath}
Further combining the above equation with  Eq.~\eqref{app:eqlagrange-mu}, we obtain the analytical global update:
\begin{linenomath}
\begin{align} 
  \boldsymbol{\mu}_{k}=\frac{\sum_{i=1}^{N}\pi_{i} (k)\cdot \mathbf{v}_{i}}{\|\sum_{i=1}^{N}\pi_{i} (k)\cdot \mathbf{v}_{i} \|_{2}^{2}},~~ \text{where}~~ \pi_{i} (k)={1}/{|{N(i)}|} \sum\nolimits_{j\in N(i)} q_{\phi}(k| \mathbf{v}_{i}, \mathbf{z}_{j}).
\end{align}
\end{linenomath}
\begin{algorithm}[H]
				\caption{Training Algorithm of DSSL}\label{app:alg1}
				\textbf{Input:}\hspace{0mm} $G=(\mathcal{V}, \mathcal{E})$ \\%and the max iteration $T$.\\
				\textbf{Output:}\hspace{0mm} Online encoder network  parameters $\theta$.\\
				%	\BlankLine
				Initialize target encoder parameter $\xi=\theta$\\
				\Repeat{convergence or reaching max iteration}{
					Randomly select a mini-batch nodes from  $\mathcal{V}$\\
					\For{each $v_i$ in the batch}{
						Randomly sample its neighbors  $\mathcal{N}(i)$\\
						Sample $\mathbf{c}$ with Gumbel softmax
					}
					$\mathcal{L}\leftarrow$ Eq.~\eqref{loss}\\
				$[ \theta, \phi, \boldsymbol{\mu}] \leftarrow  [ \theta, \phi, \boldsymbol{\mu}]-\boldsymbol{\Gamma}\left(\nabla_{ \theta, \phi, \boldsymbol{\mu}} \mathcal{L}\right)$\\
				Update $\xi$ with momentum moving average: $\xi \leftarrow \tau \xi+(1-\tau) \theta$	\\
				Update $\boldsymbol{\mu}$ at the end of each training epoch with Eq.~\eqref{Eq:global}
				}
			\end{algorithm}

\section{The Training Algorithm and Network Architecture}
\label{app:alg}
\subsection{The Training Algorithm of DSSL}
The overall training algorithm is shown in Algorithm ~\ref{app:alg1}.   Concretely, for each iteration of DSSL, we randomly sample the nodes and their neighbors. Then, we calculate the loss function with Gumbel softmax approximation. All online trainable parameters are updated with stochastic gradient descent (SGD), while those marked with $\xi$ are the target network counterparts to be updated with exponential moving average. We  apply a global update for the  prototypes $\boldsymbol{\mu}$ at the end of each training epoch.
\begin{figure}
\centering
    \includegraphics[width=0.85\textwidth]{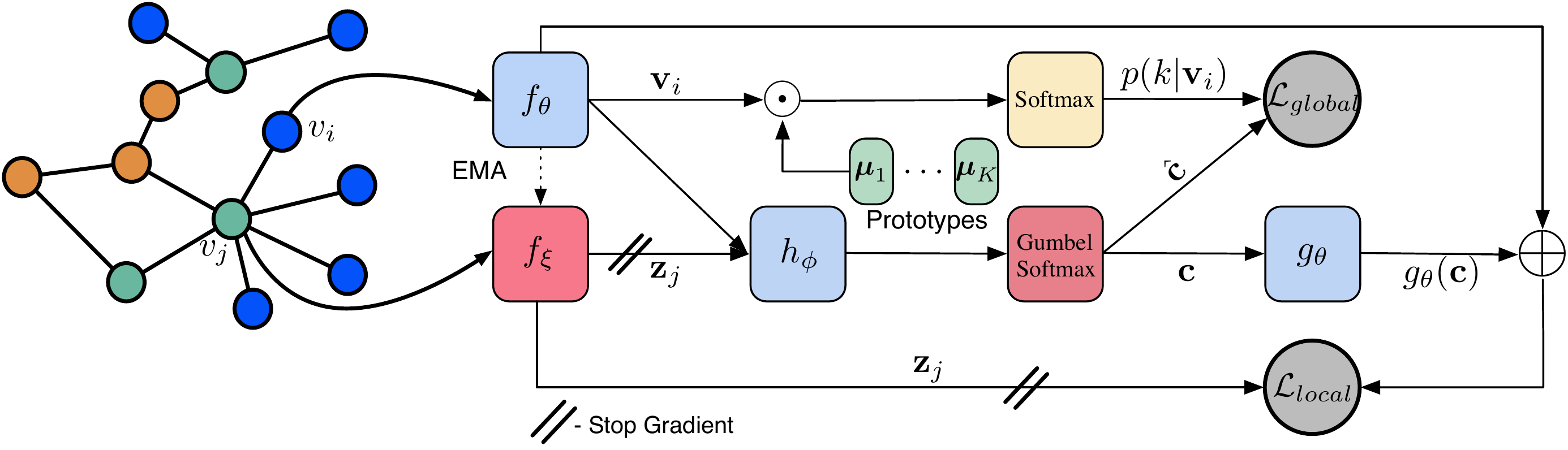}
    \vskip -0.5em
     \caption{An illustration of the overall DSSL framework. For simplicity, we only illustrate the learning process on one neighbor $v_{j}$ of central node $v_{i}$ while it can be applied to all neighbors.}\label{fig:network}
\end{figure}
 \vskip -1.5em
\subsection{The Network Architecture of DSSL}
The DSSL framework is illustrated in Figure~\ref{fig:network}. DSSL proposes a self-supervised scheme that learns to encode the node using an online encoder along with a set of prototypes, inference projector $g_{\theta}$, and predictor $h_{\phi}$.  A target encoder is used to produce presentations of neighbors and updated by exponential moving average. We stop gradients on the target encoder coming from the local loss to introduce asymmetry between the online and target encoders.

\section{{Efficiency Analysis}}
\subsection{{Time Complexity Analysis}}
\label{sec:Time}
We give the detailed time complexity per iteration of  DSSL and compare it with representative methods on node representation learning such as GCA~\cite{zhu2021graph} and BGRL~\cite{thakoor2021large}. For  self-supervised learning, the time complexity mainly depends on two parts: the encoder and loss calculation. Here, we assume the backward pass to be approximately as costly as a forward pass following~\cite{thakoor2021large}.
Since our DSSL is agnostic to GNN encoders, we consider $L$-layer GCN as an example. A $L$-layer GCN~\cite{DBLP:conf/iclr/KipfW17} with $d$ hidden dimensions has $\mathcal{O}\left(d L|\mathcal{E}|+Nd^{2} L\right)$ complexity where $\mathcal{E}$ is the number of edges, since it costs $\mathcal{O}(d|\mathcal{E}|)$ to propagate message in each layer, and $\mathcal{O}\left(N d^{2}\right)$ to multiply by the weight matrix in each layer. Our DSSL additionally has the inference predictor and projector components in the forward pass, which costs $\mathcal{O}(C_{\text {predictor}}N+C_{\text {projector}}N)$ where $C_{\cdot}$ are constants depending on architecture of the different components.  The complexity of DSSL on loss calculation is $\mathcal{O}(KNd)+\mathcal{O}(|\mathcal{E}|d)$ where $\mathcal{O}(KNd)$ and $\mathcal{O}(|\mathcal{E}|d)$  are for global and local losses, respectively. Thus the total time  complexity per iteration for DSSL with GCN is $\mathcal{O}\left(d L|\mathcal{E}|+Nd^{2} L+C_{\text {predictor}}N+C_{\text {projector}}N+KNd+|\mathcal{E}|d\right)$. With the similar notations, we can express the time complexity of GCA as $\mathcal{O}\left(d L|\mathcal{E}|+Nd^{2} L+C_{\text {projector}}N+N^{2}d\right)$ and that of  BGRL as $\mathcal{O}\left(d L|\mathcal{E}|+Nd^{2} L+C_{\text {predictor}}N+Nd\right)$. Typically, the architectures for predictor and projector are one-layer MLP for all methods. Thus the $C_{\text {predictor}}$ and $C_{\text {projector}}$ in different methods should be similar. Based on the analysis above, we can find that the time complexity of GCA scales quadratically in the size of the nodes, and both BGRL and our DSSL linearly increase with the number of nodes $N$ when $\mathcal{E}$ is proportional to $N$. Thus the overall time complexity of our DSSL is at the same level as BGRL~\cite{thakoor2021large}, which is a large-scale self-supervised learning method proposed recently.

\section{Theoretical Analysis}
\label{app:theory}
\subsection{Proof of Theorem~\ref{the:mutual}}
To prove Theorem~\ref{the:mutual}, we first present three Lemmas:
\begin{lemma}\label{Lemma-1}
For any random variables $X$, $Y$, $Z$ and $V$, we have the following relations:
\begin{align}
    I(X, Y ; Z|V) \geq I(X ; Z|V)
\end{align}
\end{lemma}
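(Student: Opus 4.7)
The plan is to establish this inequality using the chain rule for conditional mutual information followed by the non-negativity of mutual information. Specifically, I would apply the chain rule to decompose the joint mutual information $I(X,Y;Z|V)$ into a sum involving $I(X;Z|V)$ plus a conditional term, and then argue that the extra term is non-negative.

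First I would write down the chain rule identity
\begin{equation}
I(X,Y;Z|V) = I(X;Z|V) + I(Y;Z|X,V),
\end{equation}
which can be derived directly from the definition of (conditional) mutual information via conditional entropies, using $I(A;B|C) = H(A|C) - H(A|B,C)$ and the chain rule $H(X,Y|V) = H(X|V) + H(Y|X,V)$. Next, I would invoke the standard fact that conditional mutual information is always non-negative, i.e., $I(Y;Z|X,V) \geq 0$, which follows from the non-negativity of the KL divergence since $I(Y;Z|X,V)$ equals an expected KL divergence between the conditional joint and the product of conditional marginals. Combining the equality with the non-negativity immediately yields the claim.

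There is no significant obstacle here: the result is a one-line consequence of the chain rule plus non-negativity of (conditional) KL divergence. The only point worth being careful about is keeping the conditioning variable $V$ consistent throughout the chain-rule decomposition, so that the argument is a conditional-on-$V$ version of the well-known unconditional inequality $I(X,Y;Z) \geq I(X;Z)$. Equality holds precisely when $Y$ and $Z$ are conditionally independent given $(X,V)$, though this refinement is not needed for the statement.
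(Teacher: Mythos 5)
Your proposal is correct and follows essentially the same route as the paper: the paper's proof also shows that $I(X,Y;Z|V) - I(X;Z|V) = I(Y;Z|X,V) \geq 0$, merely deriving this chain-rule identity explicitly by manipulating the integrals of the densities rather than citing it. The two arguments are the same in substance.
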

\begin{proof}

\begingroup
\allowdisplaybreaks
\begin{linenomath}
\begin{align}
&I(X, Y ; Z|V)-I(X ; Z|V) \nonumber \\
&=\iiiint_{X Y Z V} p(V) p(X, Y, Z|V) \log \frac{p(X, Y, Z|V)}{p(X, Y|V) p(Z|V)} d X d Y d Z dV \nonumber \\
&- \iiint_{X Z V} p(V)p(X, Z|V) \log \frac{p(X, Z|V)}{p(X|V) p(Z|V)} d X d ZdV \nonumber =\iiiint_{X Y Z V}P(V) p(X, Y, Z|V)\\
&  \log \frac{p(X, Y, Z|V)}{p(X, Y|V) p(Z|V)}- p(V) p(X, Y, Z|V) \log \frac{p(X, Z|V)}{p(X|V) p(Z|V)} d X d Y d Z dV\nonumber \\
&=\iiint_{X Y Z V} p(V) p(X, Y, Z|V) \log \frac{p(X, Y, Z|V)}{p(Y |X,V) p(X, Z|V)} d X d Y d Z dV = \\
&\iiint_{X Y Z V} p(V)p(Y, Z|X,V) p(X|V) \log \frac{p(Y, Z| X,V)}{p(Y| X,V) p(Z |X,V)} d X d Y d Z dV=I(Y;Z|X,V)\geq 0, \nonumber
\end{align}
\end{linenomath}
\endgroup
which completes the proof of  this Lemma.
\end{proof}

\begin{lemma}\label{Lemma-2}
For any random variables $X$ and $Z$, the mutual information $I(X; Z)$ has the following variational lower bound:
\begin{linenomath}
\begin{align}
  I(X;Z)=H(Z)-H(Z|X)&=\mathbb{E}_{q(Z)}[\log q(Z)]+\mathbb{E}_{q(Z)q(X|Z)}[\log q(Z|X)] \label{Eq:lemma2}\\
  &\geq \max_{\theta} \mathbb{E}_{q(Z)}[\log q(Z)]+\mathbb{E}_{q(Z)q(X|Z)}[\log p_{\theta}(Z|X)], \nonumber
\end{align}
\end{linenomath}
where $p_{\theta}(Z|X)$ is an introduced variational discriminator with $\theta$ representing the parameters.\end{lemma}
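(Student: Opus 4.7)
The plan is to first establish the equality in Eq.~\eqref{Eq:lemma2} by unpacking mutual information as a difference of log-expectations, and then obtain the variational inequality by appealing to the non-negativity of KL divergence. Starting from the definition $I(X;Z) = \mathbb{E}_{q(X,Z)}\bigl[\log\tfrac{q(Z|X)}{q(Z)}\bigr]$, I would split the logarithm to get $I(X;Z) = \mathbb{E}_{q(X,Z)}[\log q(Z|X)] - \mathbb{E}_{q(Z)}[\log q(Z)]$, which is exactly $H(Z) - H(Z|X)$. Rewriting the joint law via the chain rule $q(X,Z) = q(Z)\,q(X|Z)$ turns the first term into $\mathbb{E}_{q(Z)q(X|Z)}[\log q(Z|X)]$, matching the decomposition in the statement up to the sign convention $H(Z) = -\mathbb{E}_{q(Z)}[\log q(Z)]$, which I would flag in passing rather than treat as a substantive issue.

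For the inequality, I would introduce an arbitrary variational conditional $p_\theta(Z|X)$ and invoke Gibbs' inequality: for each fixed $X$, $D_{\mathrm{KL}}(q(Z|X)\,\|\,p_\theta(Z|X)) \geq 0$, which rearranges to $\mathbb{E}_{q(Z|X)}[\log q(Z|X)] \geq \mathbb{E}_{q(Z|X)}[\log p_\theta(Z|X)]$. Taking expectation over $q(X)$ preserves the inequality and lifts it to $\mathbb{E}_{q(Z)q(X|Z)}[\log q(Z|X)] \geq \mathbb{E}_{q(Z)q(X|Z)}[\log p_\theta(Z|X)]$. Substituting this into the equality proven above yields the desired lower bound; and since the inequality is valid for every $\theta$, taking the supremum on the right preserves (indeed tightens) it, which is what the $\max_\theta$ in the statement encodes.

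There is essentially no substantive obstacle here: this is the classical Barber--Agakov variational lower bound on mutual information. The only bookkeeping points are (i) to make sure the $\log q(Z|X)$ term is integrated against the joint law $q(X,Z)$, not against $q(Z|X)$ alone, so that the pointwise Gibbs inequality integrates over $X$ without reversing direction; and (ii) to be consistent about the sign convention for differential entropy so that the first line of Eq.~\eqref{Eq:lemma2} lines up with $H(Z) - H(Z|X)$. Both are notational rather than mathematical hurdles.
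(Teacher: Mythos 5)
Your proof is correct and follows essentially the same route as the paper: write $I(X;Z)$ as an entropy term plus the expected log-posterior, then lower-bound $\mathbb{E}[\log q(Z|X)]$ by $\mathbb{E}[\log p_{\theta}(Z|X)]$ via the non-negativity of $D_{\mathrm{KL}}\bigl(q(Z|X)\,\|\,p_{\theta}(Z|X)\bigr)$, exactly as the paper does. Your side remark about the sign convention is also apt — the first line of Eq.~\eqref{Eq:lemma2} as printed should read $-\mathbb{E}_{q(Z)}[\log q(Z)]$ to equal $H(Z)-H(Z|X)$, a typo that does not affect the argument.
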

\begin{proof}
\begin{linenomath}
\begin{align}
\mathbb{E}_{q(Z)q(Z|X)}[\log q(Z|X)]&=\max_{\theta}\mathbb{E}_{q(Z)q(X|Z)}[\log p_{\theta}(Z|X)]+KL(q(Z|X)||p_{\theta}(Z|X)) \nonumber \\
&\geq \max_{\theta}\mathbb{E}_{q(Z)q(X|Z)}[\log p_{\theta}(Z|X)].
\end{align}
\end{linenomath}
Plugging the inequality above into Eq.~\eqref{Eq:lemma2} completes the proof. 
\end{proof}
\begin{lemma}\label{Lemma-3}
For any random variables $X$  and $Z$, the mutual information $I(X, Z)$ has the following contrastive lower bound:
    \begin{linenomath}
\begin{align} 
I(X;Z) \geq \mathbb{E}_{p(X, Z)}[f_{{\theta}}(x, z)-\mathbb{E}_{q(\tilde{\mathcal{B}})}[\log \sum_{\tilde{z} \in \tilde{\mathcal{B}}} \exp f_{\boldsymbol{\theta}}(x, \tilde{z})]]+\log |\tilde{\mathcal{B}}|
    \end{align}
\end{linenomath}
where $x$ and $z$ are the sample instances, $f_{{\theta}} \in \mathbb{R}$ is a function parameterized by ${\theta}$ (e.g., a dot product between encoded representations of $x$ and $z$), and $\tilde{\mathcal{B}}$ is a set of samples drawn from a proposal distribution $q(\tilde{\mathcal{B}})$. The set $\tilde{\mathcal{B}}$ contains the one positive sample  and $|\tilde{\mathcal{B}}|-1$ negative samples.
\end{lemma}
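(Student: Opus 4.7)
The plan is to reproduce the standard InfoNCE/CPC lower bound along the lines of Oord et al.\ The argument I have in mind has two ingredients: a Bayes-optimality inequality for categorical cross-entropy, and a direct computation that shows the Bayes-optimal classifier's log-probability is itself controlled by $I(X;Z)-\log|\tilde{\mathcal{B}}|$. Throughout, let $K := |\tilde{\mathcal{B}}|$ and treat the batch as one ``positive'' sample $z$ drawn from $p(z\mid x)$ together with $K-1$ i.i.d.\ ``negatives'' drawn from the proposal $q$; by symmetry I may place the positive at a uniformly random index $j^\star$, which is what makes the softmax in the claim well-defined.

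First, I would compute the true posterior over which index is the positive. A short Bayes-rule calculation (conditioning on $x$ and the unordered batch) gives
\begin{align*}
p\bigl(j^\star = j \,\big|\, x, \tilde{\mathcal{B}}\bigr) \;=\; \frac{p(\tilde{z}_j \mid x)/q(\tilde{z}_j)}{\sum_{i=1}^{K} p(\tilde{z}_i \mid x)/q(\tilde{z}_i)}.
\end{align*}
In parallel, the critic $f_\theta$ induces a softmax pseudo-posterior with logits $f_\theta(x,\tilde{z}_i)$. Because categorical cross-entropy is minimized by the true posterior (Gibbs' inequality), the expected log-score of the pseudo-posterior is upper bounded by that of the Bayes-optimal posterior; this is the step that lets an arbitrary $f_\theta$ enter the bound.

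Next, I would bound $\mathbb{E}\bigl[\log p(j^\star \mid x,\tilde{\mathcal{B}})\bigr]$ by $I(X;Z)-\log K$. Writing $r(x,z):=p(z\mid x)/q(z)$, the conditional probability that $j^\star=j$ equals $r(x,\tilde{z}_j)/\sum_i r(x,\tilde{z}_i)$, so taking the expectation under $p(x,z)\prod_{i\neq j^\star} q(\tilde z_i)$ yields
\begin{align*}
\mathbb{E}\!\left[\log \frac{r(x,z)}{\tfrac{1}{K}\sum_i r(x,\tilde{z}_i)}\right] - \log K.
\end{align*}
Splitting the logarithm and applying Jensen's inequality to the concave map $\log$ on the average $\tfrac{1}{K}\sum_i r(x,\tilde z_i)$ (the negatives are independent of $x$, so their expected ratios equal one) upper bounds this quantity by $\mathbb{E}[\log r(x,z)] - \log K = I(X;Z) - \log K$.

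Combining the two inequalities and rearranging gives exactly the claimed bound. The main obstacle I anticipate is bookkeeping rather than ideas: being careful that Jensen is applied in the right direction (concavity of $\log$, with the expectation over negatives inside), and ensuring the proposal/marginal distinction is handled consistently so that $\mathbb{E}_{q}[r(x,\tilde z)]=1$. Once those are set up cleanly, the derivation collapses to Gibbs' inequality followed by a single application of Jensen.
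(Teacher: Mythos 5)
The paper does not actually prove this lemma---it only cites Poole et al.\ for it---so you are supplying your own argument, and your first two steps (the Bayes posterior over the positive's index, then Gibbs' inequality to replace the critic-induced softmax by the true posterior) are a legitimate and standard route. The gap is in your final step. You need to show that $\mathbb{E}\bigl[\log\tfrac{1}{K}\sum_i r(x,\tilde z_i)\bigr]\geq 0$, i.e.\ that subtracting this term only decreases the bound. But Jensen's inequality for the concave $\log$, with the expectation over the negatives pulled inside as you describe, gives $\mathbb{E}\bigl[\log\tfrac{1}{K}\sum_i r_i\bigr]\leq\log\mathbb{E}\bigl[\tfrac{1}{K}\sum_i r_i\bigr]$ --- an \emph{upper} bound on that expectation, hence a \emph{lower} bound on the quantity you are trying to control from above. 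The fact that $\mathbb{E}_{q}[r(x,\tilde z)]=1$ for the negatives therefore tells you nothing in the direction you need; as stated, the step fails.

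The inequality you need is nonetheless true (when the proposal equals the true marginal), but it requires a different observation: writing $\pi_j$ for the joint law with the positive in slot $j$, $\bar\pi=\tfrac1K\sum_j\pi_j$ for its index-symmetrization, and $\rho=p(x)\prod_i q(\tilde z_i)$ for the all-proposal product, one has $\tfrac{1}{K}\sum_i r(x,\tilde z_i)=\bar\pi/\rho$, and since this function is symmetric in the batch, $\mathbb{E}_{\pi_1}[\log(\bar\pi/\rho)]=\mathbb{E}_{\bar\pi}[\log(\bar\pi/\rho)]=\mathrm{KL}(\bar\pi\,\|\,\rho)\geq 0$. Replacing your Jensen step with this KL identity closes the argument. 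Two further caveats: (i) your identification $\mathbb{E}_{p(x,z)}[\log r(x,z)]=I(X;Z)$ holds only when $q$ is the marginal $p(z)$; for a general proposal this expectation equals $I(X;Z)+\mathrm{KL}(p_Z\|q)$ and the bound as literally stated in the lemma can in fact fail (e.g.\ $X\perp Z$ with $q$ and $p_Z$ nearly disjoint drives the right-hand side toward $\log K>0$), so you should state $q=p_Z$ explicitly. (ii) For reference, the route in the cited work of Poole et al.\ avoids the classifier argument entirely, deriving InfoNCE from the NWJ/tractable-unnormalized bound with a multi-sample Monte Carlo baseline; either route is acceptable once repaired.
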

\begin{proof}
The  proof of this contrastive estimation bound can be founded in~\cite{poole2019variational}.
\end{proof}
Now, we are ready to prove our Theorem~\ref{the:mutual}. We first restate Theorem~\ref{the:mutual}:

\textbf{Theorem~\ref{the:mutual}}.
    \textit{Optimizing local and global terms in Eq.~\eqref{loss} is equivalent to maximizing the mutual information between the representation $\mathbf{v}$ and global signal ${k}$ and maximizing the conditional mutual information between $\mathbf{v}$ and the local signal  $\mathbf{z}$, conditioned on global signal $k$. Formally, we have:}
    \begin{linenomath}
\begin{align} 
       \max_{\theta, \phi, \boldsymbol{\mu}} \mathcal{L}\Rightarrow  \max_{\mathbf{v}} {I}(\mathbf{v};k)+ {I}(\mathbf{v};\mathbf{z}|k)= {I}(\mathbf{v};k,\mathbf{z}).
    \end{align}
\end{linenomath}
\begin{proof}
According to Lemma~\ref{Lemma-1}, for any $v_j \in N(v)$ we have:
\begin{align}
I(\mathbf{v};\mathbf{z}|k)=I(\mathbf{v};\mathbf{z}_{1},\cdots, \mathbf{z}_{N(v)}|k)\geq I (\mathbf{v};\mathbf{z}_{j}|k).
\end{align}
Given the above, we further have the following:
\begin{linenomath}
\begin{align} 
I(\mathbf{v};\mathbf{z}|k)=\frac{1}{|\mathcal{N}(v)|}\sum_{j\in N(v)}I(\mathbf{v};\mathbf{z}|k)\geq \frac{1}{|\mathcal{N}(v)|}\sum_{j\in N(v)} I (\mathbf{v};\mathbf{z}_{j}|k). \label{appEq:sum}
\end{align}
\end{linenomath}
For $I (\mathbf{v};\mathbf{z}_{j}|k)$, we can relate it to our local loss $\mathcal{L}_{local}$ in the main body of the paper:
\begin{linenomath}
\begin{align} 
\max_{\mathbf{v}} I (\mathbf{v};\mathbf{z}_{j}|k)&\Leftrightarrow \max_{\mathbf{v}} H(\mathbf{z}_{j}|k)-H(\mathbf{z}_{j}|k,\mathbf{v}) \nonumber \\
&\Leftarrow \min_{\mathbf{v},\theta}- \mathbb{E}_{q(\mathbf{v})q(\mathbf{z}_{j})q(k|\mathbf{v},\mathbf{z}_{j})}[\log p_{\theta}(\mathbf{z}_{j}|k,\mathbf{v})]\nonumber\\
&\Leftrightarrow \min_{\theta,\phi}- \mathbb{E}_{q_{\theta}(\mathbf{v})q_{\xi}(\mathbf{z}_{j})q_{\phi}(k|\mathbf{v},\mathbf{z}_{j})}[\log p_{\theta}(\mathbf{z}_{j}|k,\mathbf{v})]\nonumber\\
&\Leftrightarrow \min_{\theta,\phi}- \mathbb{E}_{\mathbf{v}=f_{\theta}(\mathcal{G})[v], \mathbf{z}_{j}=f_{\xi}(\mathcal{G})[j],q_{\phi}(k|\mathbf{v},\mathbf{z}_{j})}[\log p_{\theta}(\mathbf{z}_{j}|k,\mathbf{v})] \nonumber \\
&\Leftrightarrow \min_{\theta,\phi}- \mathbb{E}_{\mathbf{v}=f_{\theta}(\mathcal{G})[v], \mathbf{z}_{j}=f_{\xi}(\mathcal{G})[j],q_{\phi}(k|\mathbf{v},\mathbf{z}_{j})}[\log p_{\theta}(\mathbf{z}_{j}|k,\mathbf{v})] \nonumber \\
&\Leftrightarrow  \min_{\theta,\phi} \frac{1}{2 \sigma_{2}^{2}} \mathbb{E}_{q_{\phi}(k \mid \mathbf{v}, \mathbf{z}_{j})}[\left\|\mathbf{v}+\beta g_{\theta}(k)-\mathbf{z}_{j}\right\|_{2}^{2}], \label{appEq:local}
\end{align}
\end{linenomath}
where we use Lemma~\ref{Lemma-2} in the second line, and $\mathbf{v}$ and $\mathbf{z}_{j}$ are the representations of node $v$ and node $z_j$ through deterministic encoder $f$ with input $\mathcal{G}=\{\mathbf{X},\mathbf{A}\}$. Note that the target encoder $\xi$ is  updated via the exponential moving average of the online encoder $\theta$. Combining Eqs.~\eqref{appEq:local} and ~\eqref{appEq:sum}, we have:
\begin{linenomath}
\begin{align} 
    \min_{\theta,\phi} \frac{1}{\mathcal{N}(v)} \sum_{j\in\mathcal{N}(v)} \frac{1}{2 \sigma_{2}^{2}} \mathbb{E}_{q_{\phi}(k \mid \mathbf{v}, \mathbf{z}_{j})}[\left\|\mathbf{v}+\beta g_{\theta}(k)-\mathbf{z}_{j}\right\|_{2}^{2}] \Rightarrow \max_{\mathbf{v}}I(\mathbf{v};\mathbf{z}|k).
\end{align}
\end{linenomath}
Our global loss is further related to contrastive estimation. Using Lemma~\ref{Lemma-3} on $I(\mathbf{v};k)$, we have:
\begin{linenomath}
\begin{align} 
I(\mathbf{v};k)\geq \mathbb{E}_{p(\mathbf{v},k)}[f_{{\theta}}(\mathbf{v}, k )-\mathbb{E}_{q(\tilde{\mathcal{B}})}[\log \sum_{\tilde{k} \in \tilde{\mathcal{B}}} \exp f_{\boldsymbol{\theta}}(\mathbf{v}, \tilde{k})]+\log |\tilde{\mathcal{B}}|.
\end{align}
\end{linenomath}
When $\tilde{\mathcal{B}}$ always includes all possible values of the latent  variable $k$ (i.e., $|\tilde{\mathcal{B}}|=K$) and they are uniformly distributed, and setting $f_{{\theta}}(\mathbf{v}, k )={\mathbf{v}^{\top} \cdot \boldsymbol{\mu}_{k}}/{\sigma_{1}^{2}}$, we have:
\begin{linenomath}
\begin{align} 
I(\mathbf{v};k)\geq \mathbb{E}_{\mathbf{v}=f_{\theta}(\mathcal{G})[v],q_{\phi}(k|\mathbf{v}_{})}[\log \frac{\exp \left(\mathbf{v}_{}^{\top} \cdot \boldsymbol{\mu}_{k} / {\sigma}_{1}^{2}\right)}{\sum_{k^{\prime}=1}^{K}  \exp (\mathbf{v}_{}^{\top} \cdot \boldsymbol{\mu}_{k’} / {\sigma}_{1}^{2})}].
\end{align}
\end{linenomath}
Given the above, we have:
\begin{linenomath}
\begin{align} 
    \max_{\theta,\phi,\boldsymbol{\mu}}\mathbb{E}_{\mathbf{v}=f_{\theta}(\mathcal{G})[v],q_{\phi}(k|\mathbf{v}_{})}[\log \frac{\exp \left(\mathbf{v}_{}^{\top} \cdot \boldsymbol{\mu}_{k} / {\sigma}_{1}^{2}\right)}{\sum_{k^{\prime}=1}^{K}  \exp \left(\mathbf{v}_{}^{\top} \cdot \boldsymbol{\mu}_{k’} / {\sigma}_{1}^{2}\right)}] \Rightarrow \max_{\mathbf{v}} I(\mathbf{v};k). \label{appEq:global}
\end{align}
\end{linenomath}
Combining Eqs.~\eqref{appEq:local} and ~\eqref{appEq:global}, and
omitting the entropy term in Eq.~\eqref{loss} completes the proof.
\end{proof}

\subsection{Proof of Theorem~\ref{the:bound}}
To prove Theorem~\ref{the:bound}, we first present the following Lemmas.
\begin{lemma}\label{Lemma-4}
For a representation $\mathbf{v}$ that is obtained with a deterministic GNN encoder $f_{\theta}$ of input graph $\mathcal{G}$ with enough capacity, we have the data processing Markov chain: $(k,\mathbf{z})\leftrightarrow \mathbf{y} \leftrightarrow \mathcal{G} \rightarrow \mathbf{v}$.
\end{lemma}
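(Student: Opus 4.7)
The plan is to verify the claimed data-processing chain link by link, establishing the conditional-independence statements it encodes, and then composing them. The chain to be verified is $(k,\mathbf{z})\leftrightarrow \mathbf{y} \leftrightarrow \mathcal{G} \rightarrow \mathbf{v}$, so there are three links to justify and one composition step.

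First I would dispatch the terminal link $\mathcal{G}\rightarrow\mathbf{v}$. Since $\mathbf{v}=f_{\theta}(\mathcal{G})[v]$ is produced by a deterministic GNN encoder, the conditional distribution of $\mathbf{v}$ given $\mathcal{G}$ is a point mass, so $p(\mathbf{v}\mid \mathcal{G}, k,\mathbf{z},\mathbf{y})=p(\mathbf{v}\mid\mathcal{G})$. This immediately gives $\mathbf{v}\perp (k,\mathbf{z},\mathbf{y})\mid\mathcal{G}$. Next, the middle link $\mathbf{y}\leftrightarrow\mathcal{G}$ is just the statement that label and graph are jointly distributed random variables coming from the data-generating process, with $\mathbf{y}$ not being a function of $\mathbf{v}$ beyond what is mediated by $\mathcal{G}$; this is standard and requires no argument beyond pointing out the generative setup.

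The head link $(k,\mathbf{z})\leftrightarrow\mathbf{y}$ is where the real content sits. Here $\mathbf{z}=\{f_{\xi}(\mathcal{G})[j]\mid v_j\in N(v)\}$ is obtained by the target encoder, and $k$ is drawn from $q_{\phi}(k\mid\mathbf{v},\mathbf{z}_{j})$, so strictly both are deterministic (or stochastic but $\mathcal{G}$-measurable) outputs computed from $\mathcal{G}$. I would therefore argue that the required conditional independence $(k,\mathbf{z})\perp\mathcal{G}\mid\mathbf{y}$ holds in the approximate sense of Assumption~\ref{assumption}: the bound $I(\mathcal{G};\mathbf{y}\mid\mathbf{z},k)\leq\epsilon$ is exactly the DPI-type statement that, given $(k,\mathbf{z})$, the graph carries at most $\epsilon$ nats of extra information about $\mathbf{y}$. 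In the idealized limit $\epsilon=0$ the chain is exact; for $\epsilon>0$ it is an $\epsilon$-approximate Markov chain, which is precisely what downstream results (in particular the $-\epsilon$ term in Theorem~\ref{the:bound}) are calibrated to consume. Finally I would compose the three links: since $\mathbf{v}\perp(k,\mathbf{z},\mathbf{y})\mid\mathcal{G}$ (exact) and $(k,\mathbf{z})\perp\mathcal{G}\mid\mathbf{y}$ (up to $\epsilon$), standard Markov-chain composition gives the full chain, up to the same $\epsilon$ slack.

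The main obstacle is the tension in the middle: because $(k,\mathbf{z})$ are \emph{functions} of $\mathcal{G}$ rather than of $\mathbf{y}$, a literal Markov property $(k,\mathbf{z})\perp\mathcal{G}\mid\mathbf{y}$ cannot hold exactly, and so the lemma must be read as an approximate data-processing chain whose justification is Assumption~\ref{assumption}. Being careful about this point is what ensures the chain can then be fed into DPI (e.g., $I(\mathbf{v};\mathbf{y})\leq I(\mathcal{G};\mathbf{y})$ exactly, and $I(\mathbf{v};\mathbf{y})\geq I(\mathcal{G};\mathbf{y})-\epsilon$ when $\mathbf{v}$ preserves $(k,\mathbf{z})$) in the proof of Theorem~\ref{the:bound} without introducing any additional hidden assumptions.
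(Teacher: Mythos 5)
Your proposal is correct, and its one substantive step --- that a deterministic encoder yields $\mathbf{v}\perp(k,\mathbf{z},\mathbf{y})\mid\mathcal{G}$ --- is exactly the paper's entire proof, which consists of that observation and nothing more. Your concern about the head link is well founded but moot in how the lemma is actually consumed: the double arrows $(k,\mathbf{z})\leftrightarrow\mathbf{y}\leftrightarrow\mathcal{G}$ only record statistical dependence from the generative setup, the proof of Theorem~\ref{the:bound} uses only the data-processing inequalities that follow from $\mathbf{v}=f_{\theta}(\mathcal{G})$, and the $\epsilon$ slack enters there directly via Assumption~\ref{assumption} rather than through any (approximate) Markov property asserted by this lemma.
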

\begin{proof}
Since $\mathbf{v}=f_{\theta}(\mathcal{G})$ is a deterministic function of input graph $\mathcal{G}$, we have the following conditional independence: $(k,\mathbf{z}) \perp \mathbf{v}|\mathcal{G}$ and $\mathbf{y} \perp \mathbf{v}|\mathcal{G}$~\cite{federici2019learning}, which leads to the data processing Markov chain $(k,\mathbf{z})\leftrightarrow \mathbf{y} \leftrightarrow \mathcal{G} \rightarrow \mathbf{v}$. Thus, the proof is completed.
\end{proof}

\begin{lemma}
By optimizing local and global terms  in Eq.~\eqref{loss},  $\mathbf{v}_{joint}$ is both minimal and sufficient: $\mathbf{v}_{joint}$ is sufficient $\mathbf{v}_{joint}=\arg\max_{\mathbf{v}}I(\mathbf{v};k,\mathbf{z})$; $\mathbf{v}_{joint}$ is minimal $\mathbf{v}_{joint}=\arg\min_{\mathbf{v}}H(\mathbf{v}|k,\mathbf{z})$.
\end{lemma}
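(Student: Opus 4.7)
The lemma makes two claims about $\mathbf{v}_{\mathrm{joint}}$—sufficiency and minimality—and the plan is to handle them separately, since sufficiency is essentially a restatement of Theorem~\ref{the:mutual} while minimality requires inspecting the geometry of the local reconstruction loss.

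For sufficiency, I would invoke Theorem~\ref{the:mutual} directly. That theorem already shows that optimizing the local and global terms of $\mathcal{L}$ is equivalent to maximizing $I(\mathbf{v};k,\mathbf{z})$ over the encoder parameters. Consequently the minimizer $\mathbf{v}_{\mathrm{joint}}$ of $\mathcal{L}$ coincides with $\arg\max_{\mathbf{v}} I(\mathbf{v};k,\mathbf{z})$, which is the defining condition for $\mathbf{v}_{\mathrm{joint}}$ to be a sufficient representation of $\mathcal{G}$ relative to the joint self-supervised signal $(k,\mathbf{z})$. In other words, by the Markov chain of Lemma~\ref{Lemma-4}, all of $\mathcal{G}$'s information about $(k,\mathbf{z})$ that can possibly be preserved through a deterministic encoder is preserved in $\mathbf{v}_{\mathrm{joint}}$.

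For minimality, I would lean on the reconstruction form of the local term $\mathbb{E}_{q_{\phi}(k|\mathbf{v},\mathbf{z}_j)}\bigl[\|\mathbf{v}+\beta g_{\theta}(k)-\mathbf{z}_j\|_2^{2}\bigr]$. At its optimum this term forces $\mathbf{v}=\mathbf{z}_j-\beta g_{\theta}(k)$ (almost surely over $j\in N(i)$ and $k\sim q_{\phi}$), so that $\mathbf{v}$ becomes a deterministic function of the pair $(k,\mathbf{z})$. A function of $(k,\mathbf{z})$ has zero conditional entropy given $(k,\mathbf{z})$, so $H(\mathbf{v}_{\mathrm{joint}}|k,\mathbf{z})=0$, which attains the non-negative information-theoretic lower bound. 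Equivalently, using the decomposition $H(\mathbf{v}|k,\mathbf{z})=H(\mathbf{v})-I(\mathbf{v};k,\mathbf{z})$ together with $H(\mathbf{v})=I(\mathbf{v};\mathcal{G})$ for deterministic encoders, sufficiency pins down the second term and the reconstruction loss is what controls the first, so the two properties fall out of the same variational principle.

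The main obstacle is the gap between ``the squared loss is minimized'' and the literal conclusion $\mathbf{v}=\mathbf{z}_j-\beta g_{\theta}(k)$. Under the paper's isotropic Gaussian specification $p_{\theta}(\mathbf{z}_j|\mathbf{v},k)=\mathcal{N}(\mathbf{z}_j;\mathbf{v}+\beta g_{\theta}(k),\sigma_2^{2}\mathbf{I})$, the reconstruction residual has variance $\sigma_2^{2}$, so I would phrase minimality up to this Gaussian slack, or in the limit $\sigma_2^{2}\to 0$ where the likelihood concentrates on the ridge $\mathbf{v}+\beta g_{\theta}(k)=\mathbf{z}_j$. With that caveat made explicit, the two claims together certify $\mathbf{v}_{\mathrm{joint}}$ as a minimal sufficient statistic for $(k,\mathbf{z})$, which is exactly the hypothesis on which the downstream information-preservation bound of Theorem~\ref{the:bound} relies.
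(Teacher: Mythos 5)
Your sufficiency argument is exactly the paper's: it is an immediate consequence of Theorem~\ref{the:mutual}, and you state it correctly. The minimality half, however, takes a different and shakier route, and it misses the one observation the paper's proof actually turns on. The paper does not argue from the geometry of the reconstruction term at all. It simply writes $I(\mathbf{v};\mathbf{z}|k)=H(\mathbf{v}|k)-H(\mathbf{v}|k,\mathbf{z})$ and notes that $H(\mathbf{v}|k)$ is a \emph{constant}, because the generative model fixes $p_{\theta}(\mathbf{v}|k)=\mathcal{N}(\mathbf{v};\boldsymbol{\mu}_{k},\sigma_{1}^{2}\mathbf{I})$ with a hyperparameter variance, so its entropy does not depend on which candidate representation $\mathbf{v}$ one considers. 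Hence maximizing $I(\mathbf{v};\mathbf{z}|k)$ — which Theorem~\ref{the:mutual} already establishes the loss does — is \emph{identically} the same optimization problem as minimizing $H(\mathbf{v}|k,\mathbf{z})$, and minimality falls out of sufficiency with no further work. Your "equivalently" remark gets close to this but uses the wrong decomposition: you condition on nothing and write $H(\mathbf{v}|k,\mathbf{z})=H(\mathbf{v})-I(\mathbf{v};k,\mathbf{z})$, and $H(\mathbf{v})$ is \emph{not} constant across representations, so sufficiency alone does not pin down $H(\mathbf{v}|k,\mathbf{z})$ by that route; the conditioning on $k$ is what makes the marginal entropy term drop out.

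Your primary argument — that the optimum of the squared reconstruction loss forces $\mathbf{v}=\mathbf{z}_{j}-\beta g_{\theta}(k)$, hence $H(\mathbf{v}_{\mathrm{joint}}|k,\mathbf{z})=0$ — has a genuine gap beyond the $\sigma_{2}^{2}$ slack you flag. A single vector $\mathbf{v}$ cannot satisfy that identity simultaneously for every neighbor $j\in N(i)$ and every $k$ in the support of $q_{\phi}$ unless all the quantities $\mathbf{z}_{j}-\beta g_{\theta}(k)$ coincide, which is precisely what fails on non-homophilous graphs; the actual minimizer is an average over neighbors and over the posterior on $k$, so the "almost surely" in your argument does not hold even in the $\sigma_{2}^{2}\to 0$ limit. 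You would at best conclude that $\mathbf{v}$ is a function of the whole collection $(\mathbf{z},q_{\phi})$, which is a weaker and murkier statement than the clean variational identity the paper uses. I'd recommend replacing the geometric argument with the constant-$H(\mathbf{v}|k)$ observation.
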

\begin{proof}
We have $I(\mathbf{v};\mathbf{z}|k)=H(\mathbf{v}|k)-H(\mathbf{v}|k,\mathbf{z})$, in which the entropy $H(\mathbf{v}|k)$ is a constant since $p(\mathbf{v}|k)$ is a Gaussian distribution. Thus, maximizing $I(\mathbf{v};\mathbf{z}|k)$ is equivalent to minimize $H(\mathbf{v}|k,\mathbf{z})$. Combining this with Theorem~\ref{the:mutual}, one can directly conclude that $\mathbf{v}_{joint}$ is sufficient and minimal.
\end{proof}

Next, we restate Theorem~\ref{the:bound} and provide a complete proof with the help of the Lemmas above. 

\textbf{Theorem~\ref{the:bound}}. 
\textit{Let $\mathbf{v}_{\mathrm{joint}}=\arg \max_{\mathbf{v}} I(\mathbf{v};\mathbf{z},k),\mathbf{v}_{\mathrm{local}}=\arg \max_{\mathbf{v}} I(\mathbf{v};k)$, and $\mathbf{v}_{\mathrm{global}}=\arg \max_{\mathbf{v}}I(\mathbf{v};\mathbf{z})$. Formally, we have the following inequalities about the task-relevant information:}
   \begin{align}
        I(\mathcal{G};\mathbf{y})=\max_{\mathbf{v}} I(\mathbf{v};\mathbf{y})\geq I(\mathbf{v}_{\mathrm{joint}};\mathbf{y})\geq \max (I(\mathbf{v}_{\mathrm{local}};\mathbf{y}),I(\mathbf{v}_{\mathrm{global}};\mathbf{y}))\geq I(\mathcal{G};\mathbf{y})-\epsilon. \label{Eq:proposition}
   \end{align}
\begin{proof}
Based on Lemma~\ref{Lemma-4}, we have the following data processing inequality~\cite{thomas2006elements}:
\begin{align}
    I(k,\mathbf{z};\mathcal{G}) \geq I(k,\mathbf{z};\mathbf{v}),  I(k,\mathbf{z};\mathcal{G}; \mathbf{y}) \geq I(k,\mathbf{z};\mathbf{v}; \mathbf{y}), I(\mathcal{G},\mathbf{y})\geq I(\mathbf{v},\mathbf{y}).
\end{align}
Since $\mathbf{v}_{\mathrm{joint}}=\arg \max_{\mathbf{v}} I(\mathbf{z},k;\mathbf{v})$, we  have: $I(k,\mathbf{z};\mathbf{v}_{\mathrm{joint}})=I(k,\mathbf{z};\mathcal{G})$ and $I(k,\mathbf{z};\mathbf{v}_{\mathrm{joint}};\mathbf{y})=I(k,\mathbf{z};\mathcal{G};\mathbf{y})$. In addition, since $\mathbf{v}_{joint}$ is minimal, we also have, $I(\mathbf{v}_{joint} ; \mathbf{y}| k,\mathbf{z}) \leq H(\mathbf{v}_{joint}|k,\mathbf{z})=0$.
Give the above, we have the following equality:
\begin{linenomath}
\begin{align}
I(\mathbf{v}_{joint}; \mathbf{y}) &=I(\mathbf{v}_{joint}; \mathbf{y} ; k,\mathbf{z})+I(\mathbf{v}_{joint} ; \mathbf{y} |k,\mathbf{z}) \nonumber \\
&=I(\mathcal{G} ; \mathbf{y} ; k,\mathbf{z})+I(\mathbf{v}_{joint} ; \mathbf{y} |k,\mathbf{z}) \nonumber \\
&=I(\mathcal{G} ; \mathbf{y} ; k,\mathbf{z})+0\nonumber \\
&=I(\mathcal{G} ; \mathbf{y})-I(\mathcal{G} ; \mathbf{y} | k,\mathbf{z}) \nonumber \\
&=\max_{\mathbf{v}}I(\mathbf{v} ; \mathbf{y}) -I(\mathcal{G} ; \mathbf{y} | k,\mathbf{z})=I(\mathbf{v}_{\mathrm{sup}} ; \mathbf{y}) -I(\mathcal{G} ; \mathbf{y} | k,\mathbf{z}).\label{appEq:mutual-bound}
\end{align}
\end{linenomath}
Thus, the mutual information gap  between self-supervised representation $\mathbf{v}_{joint}$ and supervised representation $\mathbf{v}_{sup}$ is $I(\mathcal{G} ; \mathbf{y} | k,\mathbf{z})$. Based on the property of mutual information, we further have:
\begin{linenomath}
\begin{align}
I(\mathcal{G} ; \mathbf{y}|k)=I(\mathcal{G} ; \mathbf{y};\mathbf{z}|k)+I(\mathcal{G} ; \mathbf{y} | k,\mathbf{z})\geq I(\mathcal{G} ; \mathbf{y} | k,\mathbf{z}).\label{appEq:mutual-bound2}
\end{align}
\end{linenomath}
Similarly, we have $I(\mathcal{G}; \mathbf{y}|\mathbf{z})\geq I(\mathcal{G} ; \mathbf{y} | k,\mathbf{z})$. Combining Eqs.~\eqref{appEq:mutual-bound} and ~\eqref{appEq:mutual-bound2},  we have the following inequalities about the task-relevant information  based on our Assumption~\ref{assumption}:
\begin{linenomath}
   \begin{align}
    I(\mathcal{G};\mathbf{y})=\max_{\mathbf{v}} I(\mathbf{v};\mathbf{y})\geq I(\mathbf{v}_{\mathrm{joint}};\mathbf{y})\geq \max (I(\mathbf{v}_{\mathrm{local}};\mathbf{y}),I(\mathbf{v}_{\mathrm{global}};\mathbf{y}))\geq I(\mathcal{G};\mathbf{y})-\epsilon,
   \end{align}
   \end{linenomath}
which completes the proof
\end{proof}

\subsection{Proof of Corollary~\ref{the:bayes}}
Restate Corollary~\ref{the:bayes}:

\textbf{Corollary~\ref{the:bayes}.} \textit{Suppose that downstream label $\mathbf{y}$ is a M-categorical random variable. Then we have the
upper bound for the downstream Bayes errors ${P}^{e}_{\mathbf{v}}=\mathbb{E}_{\mathbf{v}}\left[1-\max _{y \in \mathbf{y}} P(\hat{\mathbf{y}}=y| \mathbf{v})\right]$ on learned representation $\mathbf{v}$, where $\hat{\mathbf{y}}$ is the estimation for label from our downstream classifier:}
\begin{linenomath}
   \begin{align}
    \operatorname{Th}({P}_{\mathbf{v}_{joint}}^{e}) \leq \log 2+P_{\mathbf{v }_{\mathrm{sup}}}^{e} \cdot \log M+I(\mathcal{G} ; \mathbf{y}|\mathbf{z},k) \triangleq {\mathrm{RHS}}_{\mathbf{v}_{joint}}
\end{align}
   \end{linenomath}
\textit{where $\operatorname{Th}(x)=\min \{\max \{x, 0\}, 1-1 /|M|\}$ is a thresholded operation~\cite{tsai2020demystifying}. Similarly, we can obtain the  error upper bound of other representations $\mathbf{v}_{\mathrm{local}}$ and $\mathbf{v}_{\mathrm{global}}:$ ${\mathrm{RHS}}_{\mathbf{v}_{local}}$and ${\mathrm{RHS}}_{\mathbf{v}_{global}}$. Then, we have   inequalities on error upper bounds $: {\mathrm{RHS}}_{\mathbf{v}_{joint}}\leq \min ({\mathrm{RHS}}_{\mathbf{v}_{local}},{\mathrm{RHS}}_{\mathbf{v}_{global}})$.}
\begin{proof}
To prove this Corollary,  we use the following inequalities~\cite{thomas2006elements,tsai2020demystifying}:
\begin{linenomath}
   \begin{align}
& \operatorname{Th}({P}_{\mathbf{v}_{joint}}^{e}) \leq-\log (1-{P}_{\mathbf{v}_{joint}}^{e}) \leq H(\mathbf{y}| \mathbf{v}_{\mathrm{joint}}) \label{appEq:corollary1}\\
&H(\mathbf{y}| \mathbf{v}_{\mathrm{sup}}) \leq \log 2+P_{\mathbf{v}_\mathrm{sup}}^{e} \log M. \label{appEq:corollary2}
\end{align}
   \end{linenomath}
For $H(\mathbf{y}| \mathbf{v}_{\mathrm{joint}})$ and $H(\mathbf{y}| \mathbf{v}_{\mathrm{sup}})$, we have the following relations:
\begin{linenomath}
   \begin{align}
H(\mathbf{y}| \mathbf{v}_{\mathrm{joint}}) &=H(\mathbf{y})-I(\mathbf{v}_{\mathrm{joint}} ; \mathbf{y})=H(\mathbf{y})-I(\mathbf{v}_{\mathrm{sup}} ; \mathbf{y})+I(\mathcal{G}; \mathbf{y}| \mathbf{z},k) \nonumber \\
&=H(\mathbf{y}| \mathbf{v}_{\mathrm{sup }})+I(\mathcal{G}; \mathbf{y}| \mathbf{z},k), \label{appEq:corollary}
\end{align}
   \end{linenomath}
where we use Eq.~\eqref{appEq:mutual-bound} in the second equality. Combining Eqs.~\eqref{appEq:corollary1},  ~\eqref{appEq:corollary2}  and~\eqref{appEq:corollary}, we have:
\begin{linenomath}
   \begin{align}
    \operatorname{Th}({P}_{\mathbf{v}_{joint}}^{e}) \leq \log 2+P_{\mathbf{v }_{\mathrm{sup}}}^{e} \cdot \log M+I(\mathcal{G} ; \mathbf{y}| \mathbf{z},k) \triangleq {\mathrm{RHS}}_{\mathbf{v}_{joint}}.
\end{align}
   \end{linenomath}
Further using that $I(\mathcal{G} ; \mathbf{y} \mid \mathbf{z}, k)\leq I(\mathcal{G} ; \mathbf{y}|\mathbf{z})$ and $I(\mathcal{G} ; \mathbf{y}| \mathbf{z}, k)\leq I(\mathcal{G} ; \mathbf{y}| k)$ in Eq.~\eqref{appEq:mutual-bound2}, we have 
\begin{linenomath}
   \begin{align}
 {\mathrm{RHS}}_{\mathbf{v}_{joint}}\leq \min ({\mathrm{RHS}}_{\mathbf{v}_{local}},{\mathrm{RHS}}_{\mathbf{v}_{global}}),
\end{align}
   \end{linenomath}
which completes the proof.
\end{proof}
\section{Experimental Details}
\label{app:ED}
\begin{table}[ht]
    \centering
    \caption{Statistics of used graph datasets in this paper.}
    \label{tab:stats}
    {\footnotesize
    \begin{tabular}{lccccccl}
        \toprule[1pt]
    \textbf{Dataset} & \#\textbf{Nodes} & \# \textbf{Edges} &   \#\textbf{Classes} &   \#\textbf{Features} &   \textbf{Class-average homophily}  &   {\textbf{Homophily}} \\
    \midrule
         Cora & 2,708 & 5,278 &  7 &1,433  & 0.766 & {0.81} &\\
         Citeseer & 3,327 & 4,552 &  6  & 3,703 &    0.627 & {0.74} &\\
         Pubmed & 19,717 & 44,324 &  3  & 500 & 0.664 & {0.80} &\\
         Texas & 183 & 309 &  5  & 1,793 & 0.001 & {0.11} &\\
          Cornell & 183 & 295 &  5  & 1,703 & 0.047 & {0.30} & \\
        Squirrel & 5,201 & 216,933  &  5  & 2,089 & 0.025 & {0.22} &\\
         Penn94 & 41,554 & 1,362,229  &  2  & 5 & 0.046 & {0.47} &\\
          Twitch & 9,498 & 76,569  &  2  & 2,545 & 0.142 & {0.63} & \\
   \bottomrule[1pt]
    \end{tabular}
    }
\end{table}
\subsection{Datasets Description and Statistics}
\label{app:dataset}
In our experiments, we use the following real-world datasets.

\textbf{Cora, Citeseerm, and Pubmed} are citation and high-homophily graphs, which are among the most widely used benchmarks for semi-supervised node classification~\cite{DBLP:conf/iclr/KipfW17,hamilton2017inductive,DBLP:conf/iclr/VelickovicCCRLB18}. In these graphs, nodes are documents, and edges are citations. Each node is assigned a class label based on the research field. The features of each node are represented by a bag of words of its abstracts.  

\textbf{Texas and Cornell} are low-homophily graphs representing links between web pages of the corresponding universities and originally collected by the CMU WebKB projects, where nodes and edges represent web pages and hyperlinks, respectively.  We use the pre-processed datasets by~\cite{pei2019geom}.

\textbf{Squirrel} is the subgraph of web pages in Wikipedia discussing the related topics, collected by~\cite{rozemberczki2021multi}.  The Squirrel graph is rather complex, with both homophily and heterophily combined. We also use the pre-processed datasets by~\cite{pei2019geom}

\textbf{Penn94} is a friendship network from the Facebook 100 networks of university students from
2005~\cite{lim2021large}, where nodes represent students and are labeled with the reported gender of students. The node features are major, second major/minor, dorm/house, year, and high school.

\textbf{Twitch} is a graph of relationships between accounts on the
streaming platform Twitch. Node features
are games liked, location, and streaming habits. Nodes are labeled with the explicit language used by a streamer. We utilize the pre-processed sub-graph Twitch-DE~\cite{lim2021new}. In Twitch-DE, streamers that
do not use explicit content (class 0) also often connect to streamers of class 1, which results in an overall non-homophilous structure.

We utilize the homophily~\cite{zhu2020beyond} and class-average homophily metrics proposed recently by ~\cite{lim2021large} to measure the homophily level of graphs. Specifically, the edge-homophily is the proportion of edges that connect two nodes of the same class and the class-average homophily is defined as:
\begin{linenomath}
\begin{align}
\hat{h}=\frac{1}{C-1} \sum_{k=0}^{C-1}\left[h_{k}-\frac{\left|C_{k}\right|}{N}\right]_{+},
\end{align}
\end{linenomath}
where $[a]_{+}=\max (a, 0)$ and $C$ is the total number of classes, and $h_{k}$ is the class-wise homophily:
\begin{linenomath}
\begin{align}
h_{k}=\frac{\sum_{u \in C_{k}} d_{u}^{\left(k_{u}\right)}}{\sum_{u \in C_{k}} d_{u}},
\end{align}
\end{linenomath}
in which $d_{u}$ is the number of neighbors of node $u$, and $d_{u}^{\left(k_{u}\right)}$ is the number of neighbors of $u$ that have the same class label. Compared to the edge homophily metirc~\cite{zhu2020beyond}, this metric is less sensitive to  
the number of classes and the size of each class.
The statistics of datasets are given in Table~\ref{tab:stats}. We can find that the widely used citation graphs Cora, Citeseer, and Pubmed are highly homophilous, and other graphs show low homophily degrees and thus are non-homophilous.

\subsection{Baselines and Setup}
\label{app:setup}
\textbf{Deepwalk}~\cite{perozzi2014deepwalk} is a network embedding method that generates random walks in the graph and then learns latent representations of nodes by treating walks as the equivalent of sentences with SkipGram.

\textbf{LINE}~\cite{tang2015line} is an embedding model that preserves both the first-order and second-order proximities.

\textbf{Struc2vec}~\cite{ribeiro2017struc2vec} is the  method for node representations by capturing the node structural identity.

\textbf{GAE}~\cite{kipf2016variational} is a GCN encoder trained by reconstructing adjacency matrix.

\textbf{VGAE}~\cite{kipf2016variational} is a probabilistic version of GAE and is  trained by reconstructing the adjacency matrix.

\textbf{DGI}~\cite{velivckovic2018deep} is an unsupervised node representation method that maximizes the mutual information between node representations and graph summary.

\textbf{GraphCL}~\cite{you2020graph}  learns node representations by contrasting different augmentations of graphs.

\textbf{MVGRL}~\cite{hassani2020contrastive} is a  learning method that contrasts encoders from neighbors and a graph diffusion.

\textbf{BGRL}~\cite{thakoor2021large} is a graph representation learning method that learns node representations by predicting alternative augmentations of the graph. 

We used the official implementations released by the authors for all baselines. We ran our experiments on GeForce RTX 2080 Ti. In all our experiments, we use the Adam optimizer~\cite{kingma2014adam}. The projector $g_{\theta}$ and predictor $h_{\phi}$ are both the Multilayer Perceptron (MLP) with a single hidden layer. We also used  techniques like batch normalization and negative sampling on Penn94 and Twitch datasets.
We use the same dataset splits and training procedure for all methods. We tune hyper-parameters for all models individually based on accuracy on the validation set and randomly initialize the model parameters. For all methods, the hyper-parameter search spaces are as follows: learning rate $\{0.001, 0.005, 0.01\}$, representation dimension $\{16, 32, 64\}$, L2 weight-decay $\{5e-4, 1e-4, 5e-6,1e-6\}$. For our DSSL, we tune the following hyper-parameters: $\sigma_{1}^{2} \in \{0.2, 0.4, 0.6, 0.8, 1.0\}$, $\sigma_{2}^{2} \in \{0.2, 0.4, 0.6, 0.8, 1.0\}$, $\lambda \in \{0.2, 0.4, 0.6, 0.8, 1.0\}$, $\beta \in \{0.1, 0.2, 0.4, 0.6,0.8,1.0\}$, and $K \in \{2, 4, 6, 8, 16, 32\}$. 

\subsection{Cross-Class Neighborhood Similarity}
\label{cross-neigh}
To further demonstrate how DSSL uncovers the latent patterns in neighborhood distributions, we utilize the cross-class neighborhood similarity~\cite{ma2021homophily} to serves as the ground-truth for our case study experiments in section~\ref{vis-main}. 
\begin{definition}
(Cross-Class Neighborhood Similarity). Given a graph ${G}$ and node labels, the cross-class neighborhood similarity between two classes $c, c^{\prime}$ is given by $s\left(c, c^{\prime}\right)=$ $\frac{1}{\left|\mathcal{V}_{c}\right|\left|\mathcal{V}_{c^{\prime}}\right|} \sum_{i \in \mathcal{V}_{c}, j \in \mathcal{V}_{c^{\prime}}} \cos (d(i), d(j))$ where $\mathcal{V}_{c}$ indicates the set of nodes in class $c$ and $d(i)$ denotes the empirical histogram (over $|\mathcal{C}|$ classes) of node i's neighbors' labels, and the function $\cos (\cdot, \cdot)$ measures the cosine similarity.
\end{definition}
This cross-class neighborhood similarity measures the neighborhood distributions between difference classes. If neighborhood patterns of different label nodes are different, the inter-class similarity will be low. In contrast, if nodes with the same label share the same neighborhood distributions, the intra-class similarity should be high.

\section{Additional Experimental Results}
\label{app:exp}
\subsection{Node Clustering Results}
\label{app:node-clutering}
We provide the node clustering results in Table~\ref{table:node-clustering}. As shown in 
Table~\ref{table:node-clustering}, we can find that our DSSL can consistently improve the
node clustering performance compared to the baselines on all datasets except the Cora.   This observation once again verifies the effectiveness of DSSL in learning generalizable node representations. In addition, along with the node classification performance, these results show that DSSL can provide transferable and robust node representations for various downstream tasks.
\begin{table}[t]
\centering
\caption{
Experimental results (\%) with standard deviations on the node clustering task. The best and second best
performance under each dataset are marked with boldface and underline, respectively.}\label{table:node-clustering}
\vspace{0.02in}
% \resizebox{\textwidth}{!}{
\scalebox{0.83}{
\begin{tabular}{ccccccccc}
\toprule
 \textbf{Method}       & \textbf{Cora}      & \textbf{Citeseer}    & \textbf{Pubmed}   & \textbf{Texas}    & \textbf{Cornell}      & \textbf{Squirrel}     & \textbf{Penn94} & \textbf{Twitch} \\
\midrule
  GAE        & \facc{37.43}{0.05}  & \facc{32.45}{0.05} & \facc{34.38}{0.06} & \facc{26.97}{0.11} & \facc{15.39}{0.13} & \facc{11.73}{0.08}  & \facc{16.70}{0.15} & \facc{27.45}{0.17}  \\ 
  VGAE        & \facc{38.92}{0.08} &  \facc{36.49}{0.03} &  \facc{41.09}{0.04} & \facc{27.75}{0.16} & \facc{17.87}{0.13} & \facc{10.83}{0.09} & \facc{17.34}{0.08} & \facc{25.89}{0.08}\\ 

  DGI        & \facc{45.17}{0.08} & \facc{42.03}{0.08} & \facc{45.33}{0.02} &  \underline{\facc{34.17}{0.07}} & \facc{15.92}{0.15} & \facc{8.49}{0.13} & \facc{12.14}{0.19} & \facc{25.84}{0.18}  \\ 
  \midrule
 GraphCL        & \facc{46.29}{0.03} &  \underline{\facc{46.38}{0.12}}  & \facc{46.78}{0.03} & \facc{30.25}{0.13} &  \facc{16.86}{0.17} & \facc{7.97}{0.13} & \facc{16.35}{0.13} & \underline{\facc{27.55}{0.16}} \\ 
   
  MVGRL        &  \bfacc{48.65}{0.07} & \facc{45.33}{0.02}  & \underline{\facc{48.81}{0.21}}  & \facc{32.72}{0.18} &  {\facc{18.02}{0.11}} &  \facc{17.65}{0.08}  & \facc{15.80}{0.17} & \facc{24.53}{0.16}  \\ 
 BGRL        &  \underline{\facc{47.35}{0.03}} & \facc{44.78}{0.04}   &  \facc{47.21}{0.07}  & \facc{33.59}{0.15} &
 
 \underline{\facc{19.74}{0.14}}
 & \facc{15.13}{0.09} & \underline{\facc{17.49}{0.13}} & \facc{25.69}{0.09} \\ 
\midrule
DSSL  &  \facc{46.77}{0.04}  &  \bfacc{47.89}{0.04} &  \bfacc{49.39}{0.02} &  \bfacc{38.22}{0.15} &  \bfacc{20.36}{0.08}  &   \bfacc{19.85}{0.13} &  \bfacc{18.66}{0.15}  &   \bfacc{29.63}{0.08} \\ 
\bottomrule
\end{tabular}}
\vspace{-0.1in}
\end{table}

\subsection{{Additional Results on Synthetic Dataset}}
In this section, we generate the synthetic graphs with various homophily ratios $h$ to evaluate DSSL. Specifically, we consider the synthetic dataset: syn-products generated by~\cite{zhu2020beyond}. We refer the interested reader to~\cite{zhu2020beyond} for the detailed generation process. The statistics for synthetic dataset syn-products are provided in 
Table~\ref{tab:synthetic}. Roughly speaking, the synthetic graphs can be generated by controlling various edge-homophily ratios $h$  where $h=1$ indicts strong edge-homophily, and $h=0$ indicts perfect heterophily of generated graph. Here, we consider six edge-homophily ratios, $h=[0.0,0.25,0.50,0.75,1.0]$. We randomly split 25\% of nodes into the training set, 25\% into validation, and 50\% into the test set for the downstream linear classifier. Table~\ref{table:syn} gives the results on syn-products dataset. From this table, we  have the following observations:
(1) GAE achieves the best accuracy under strong homophily ($h=1$), but it is significantly less accurate than other methods in low-homophily settings. This is reasonable since GAE over-emphasizes proximity and the homophily assumption. (2) Our DSSL outperforms recent contrastive learning methods such as DGI and MVGRL, which shows the effectiveness of DSSL on non-homophily scenarios, and suggests that designing effective graph augmentation for those contrastive baselines on the non-homophilous graphs is difficult due to the heterogeneous and diverse neighbor pattern. (3) Our DSSL can generally achieve the best performance and effectively adapt to all levels of homophily.

% \textcolor{blue}{Table~\ref{table:synabl} further provides the  results of the ablation studies  to verify the significance of our model design. By comparing the ablations with DSSL, we can observe that all components contribute to the performance gain. Interestingly, we find that capturing the semantic shift between neighbors essentially boosts the performance in non-homophilous graphs where homophily ratio is low, and the local loss is the most important component making DSSL work. Moreover, we find that the global loss and personalized posterior can further improve the performance of DSSL. The above observations indicate that capturing diverse neighborhood pattern via modeling semantic shift and global semantic plays a significant role in learning good node representations for non-homophilous graphs.}

\begin{table}[ht]
    \centering
    \caption{{Statistics of the synthetic dataset: syn-products.}}
    \label{tab:synthetic}
    {\footnotesize
    \begin{tabular}{lccccccc}
        \toprule[1pt]
    \textbf{Dataset} & \# \textbf{Nodes} & \# \textbf{Edges} &   \# \textbf{Classes} &   \# \textbf{Features}  &   {\textbf{Edge homophily}} \\
    \midrule
         syn-products & 10000 & 59,640 to 59,648 &  10 & 100  & [0.0, 0.25, 0.50, 0.75, 1.0] & \\
   \bottomrule[1pt]
    \end{tabular}
    }
\end{table}

\begin{table}[t]
\centering
\caption{{Experimental results (\%) on   synthetic datasets, syn-products, measured in terms of node classification accuracy along with standard deviations.  The best and second best performances under each dataset are marked with  boldfac and underline, respectively.}}\label{table:syn}
\vspace{0.02in}
% \resizebox{\textwidth}{!}{
\scalebox{0.9}{
\begin{tabular}{lccccc}
\toprule
 \textbf{Homophily ratio $h$}       & \textbf{0.00}   & \textbf{0.25}    &  \textbf{0.50}  & \textbf{0.75}    &  \textbf{1.00}     \\
\midrule
GAE  &  \facc{18.28}{0.52}  &  \facc{25.49}{0.37} &  \facc{50.62}{0.28} &  \facc{82.05}{0.38} &   \bfacc{95.27}{0.11}   \\ 
DGI  &  \facc{40.21}{0.43}  &  \facc{42.38}{0.26} &  {\facc{57.31}{0.36}} &  {\facc{85.21}{0.43}} &  {\facc{92.12}{0.43}}    \\ 
MVGRL  &  \facc{42.69}{0.32}  &  \facc{44.35}{0.40} &  {\facc{59.43}{0.27}} &  \underline{\facc{86.65}{0.39}} &  {\facc{93.11}{0.35}}    \\
BGRL  &  \underline{\facc{45.31}{0.47}}  &  \underline{\facc{47.52}{0.38}} &  \underline{\facc{61.37}{0.26}} &  {\facc{86.21}{0.24}} &  {\facc{93.28}{0.37}}      \\ 
DSSL  &  \bfacc{49.19}{0.62}  &   \bfacc{51.35}{0.51} &  {\bfacc{64.78}{0.25}} &   {\bfacc{87.59}{0.46}} &  \underline{\facc{94.83}{0.55}} \\     
\bottomrule
\end{tabular}}
\vspace{-0.1in}
\end{table}

% \begin{table}[t]
% \centering
% \caption{
% \textcolor{blue}{Ablation studies of DSSL on the synthetic dataset to justify the significance of our model designs. We only provide the results of key ablations \textbf{A1}, \textbf{A2}, \textbf{A4} and \textbf{A5} mentioned in section~\ref{main:ab} given the short rebuttal period.}}\label{table:synabl}
% \vspace{0.02in}
% % \resizebox{\textwidth}{!}{
% \scalebox{0.9}{
% \begin{tabular}{lccccc}
% \toprule
%  \textbf{Homophily ratio $h$}       & \textbf{0.00}   & \textbf{0.25}    &  \textbf{0.50}  & \textbf{0.75}    &  \textbf{1.00}     & 
% \midrule
% 	\textbf{A1}  w/o local loss   &  \facc{15.34}{0.27}  &  \facc{19.27}{0.18} &  \facc{23.70}{0.42} &  \facc{26.32}{0.43} &  \facc{27.14}{0.32}   \\ 
% 	\textbf{A2}  w/o global loss   &  \underline{\facc{43.41}{0.55}}  &  \underline{\facc{45.61}{0.18}} &  \underline{\facc{58.69}{0.46}} &  \underline{\facc{83.94}{0.31}} & {\facc{92.28}{0.19}}   \\ 
% 	\textbf{A4} {w/o semantic shift}  &  \facc{40.64}{0.45}  &  \facc{42.37}{0.45} &  \facc{54.77}{0.18} &  \facc{81.68}{0.19} &  \underline{\facc{92.53}{0.49}}   \\ 
	
% 	\textbf{A4} {w/ uniform posterior}  &  \facc{36.49}{0.37}  &  \facc{38.92}{0.18} &  \facc{47.63}{0.55} &  \facc{61.15}{0.33} &  \underline{\facc{82.43}{0.16}}   \\ 
		
% DSSL  &  \bfacc{49.19}{0.62}  &   \bfacc{51.35}{0.51} &  {\bfacc{64.78}{0.25}} &   {\bfacc{87.59}{0.46}} &   {\bfacc{94.83}{0.55}}     \\ 
% \bottomrule
% \end{tabular}}
% \vspace{-0.1in}
% \end{table}

\begin{figure}
	\centering
	\includegraphics[width=1.0\linewidth]{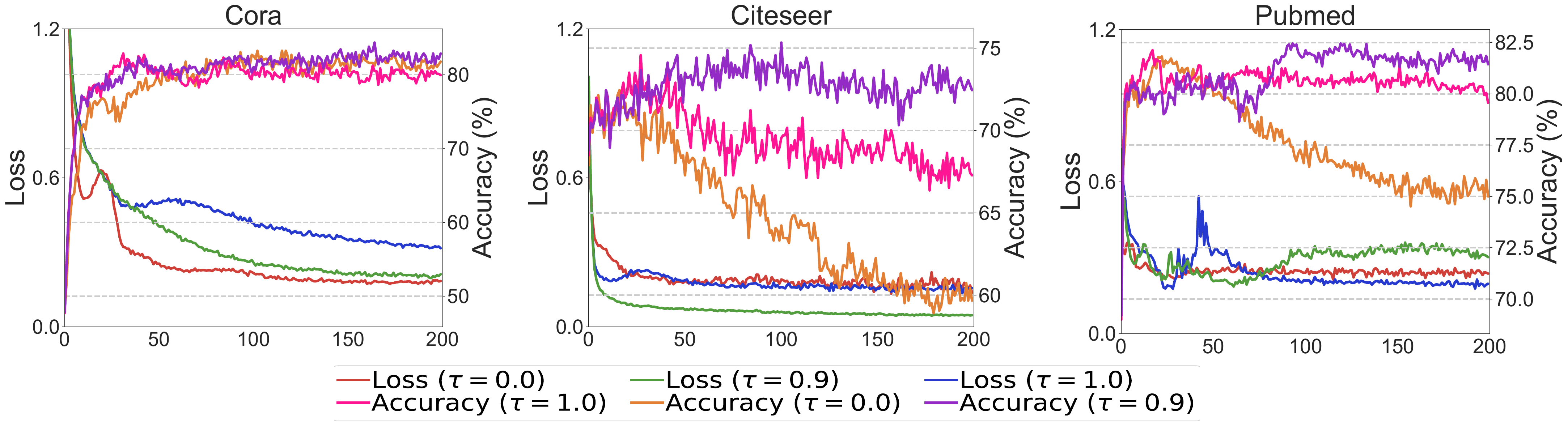}
	\vskip -0.5em
	\caption{Performance on Cora, Citeseer and Pubmed with varying $\tau$.}
	\label{fig:appendix_ab1}
\end{figure}

\begin{figure}
	\centering
	\includegraphics[width=1.0\linewidth]{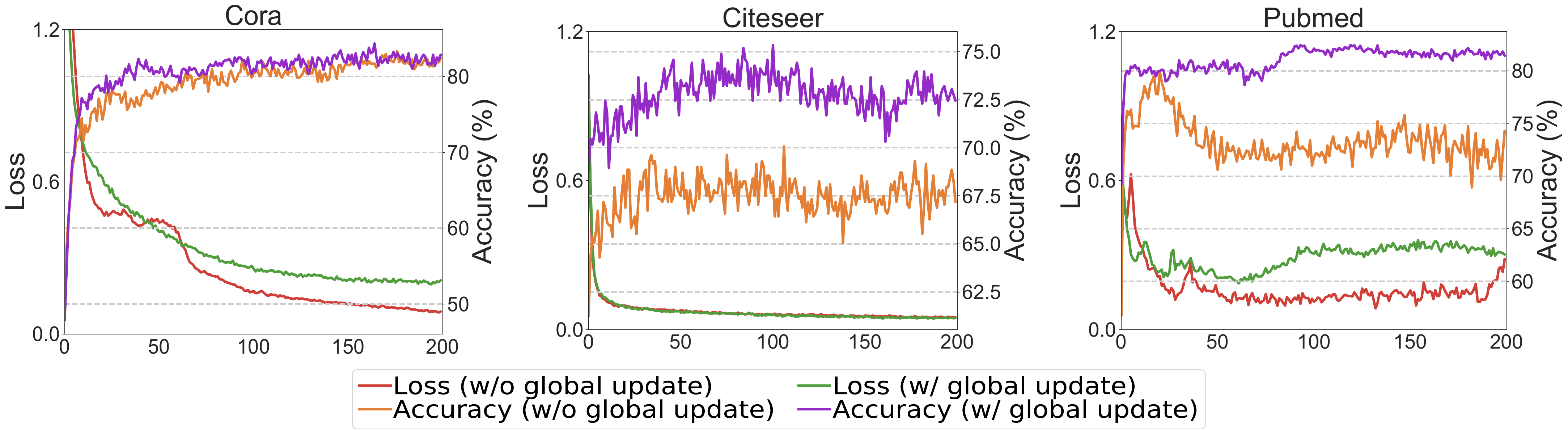}
	\vskip -0.5em
	\caption{Performance on Cora, Citeseer and Pubmed w/ and w/o global update.}
	\label{fig:appendix_ab2}
	\vskip -1em
\end{figure}

\begin{figure*}
	\centering
	\includegraphics[width=1.0\linewidth]{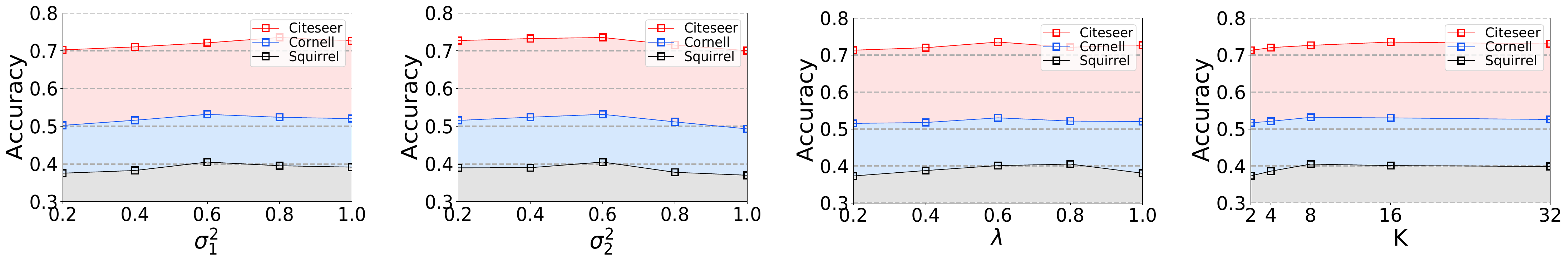}
	\vskip -1em
	\caption{ Hyper-parameter analysis  on Citeseer, Cornell and Squirrel datasets.}\label{fig:parameters-app}
\end{figure*}

\subsection{Additional Results on Ablation Study and Parameter Analysis}
\label{app:ab}
\textbf{Ablation Study}. 
We further conduct experiments to study the effects of the global update on prototypes and asymmetric architecture on homophily datasets.
Figures~\ref{fig:appendix_ab1} and ~\ref{fig:appendix_ab2} show the ablation results on Cora, Citeseer and Pubmed.  
These results also echo the collapse issue as discussed in the main text: without the global update and asymmetric architecture, there is 
a significant performance drop on these three homophily datasets.

\textbf{Parameter Analysis}. 
We provide more results on the parameter analysis. Figure~\ref{fig:parameters-app} shows the results on  Citeseer, Cornell, and Squirrel datasets. We can observe that having large values of $\sigma_2^{2}$ also does not improve the overall performance, and  DSSL is not very sensitive to the Gumbel softmax temperature $\lambda$ for these three datasets. We also find that as $K$ increases from $2$ to $8$ ($2$ to $16$ for Citeseer), the accuracy
of DSSL improves.

\begin{figure}[t]
	\centering
	\includegraphics[width=0.95\linewidth]{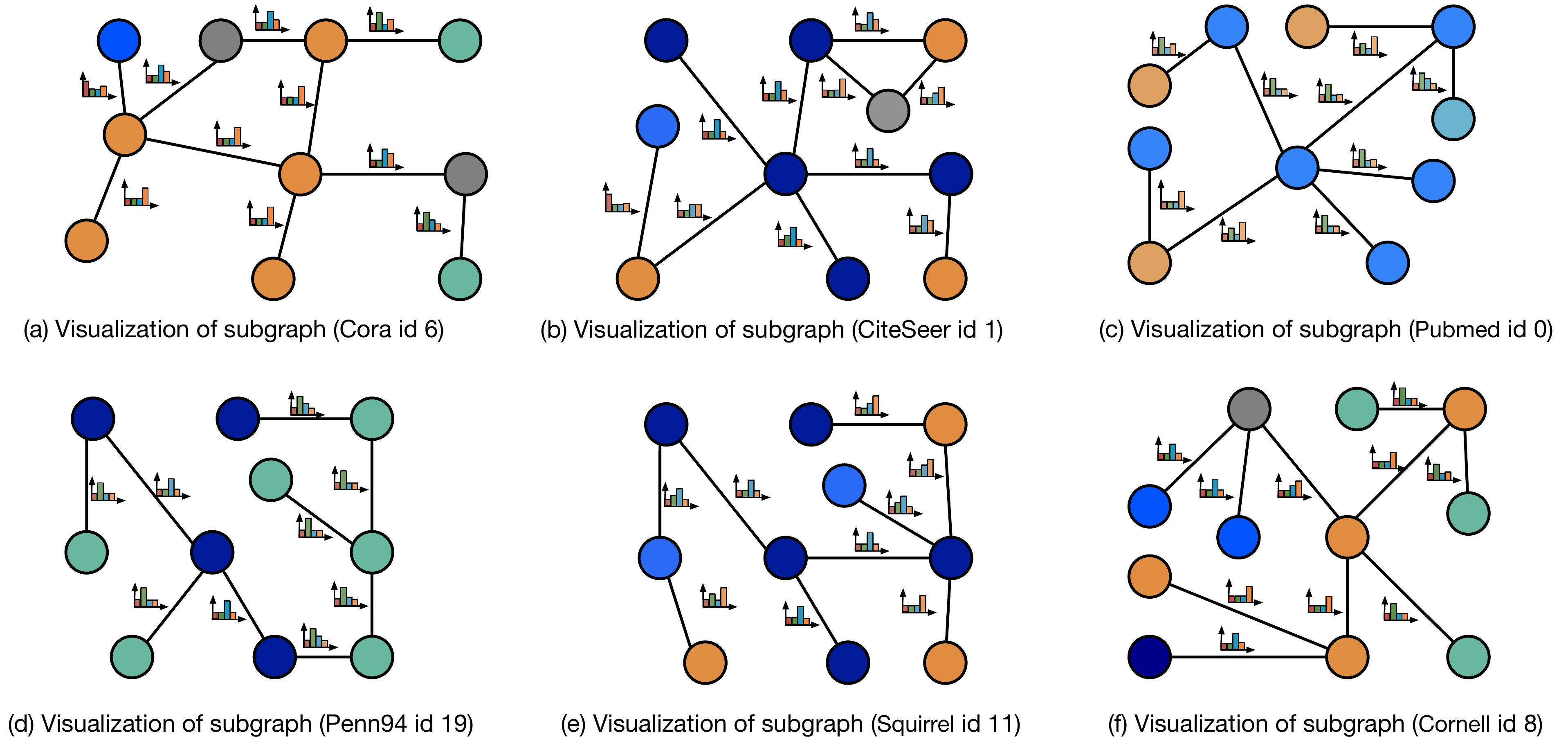}
	\vskip -0.5em
	\caption{The visualization and case study results on other datasets. (best
viewed on a computer screen and note that the latent distribution of each link need
to be zoomed in to be better visible).}
	\label{fig:appendix_visu}
	\vskip -1em
\end{figure}

\begin{figure}[t]
	\centering
	\includegraphics[width=0.95\linewidth]{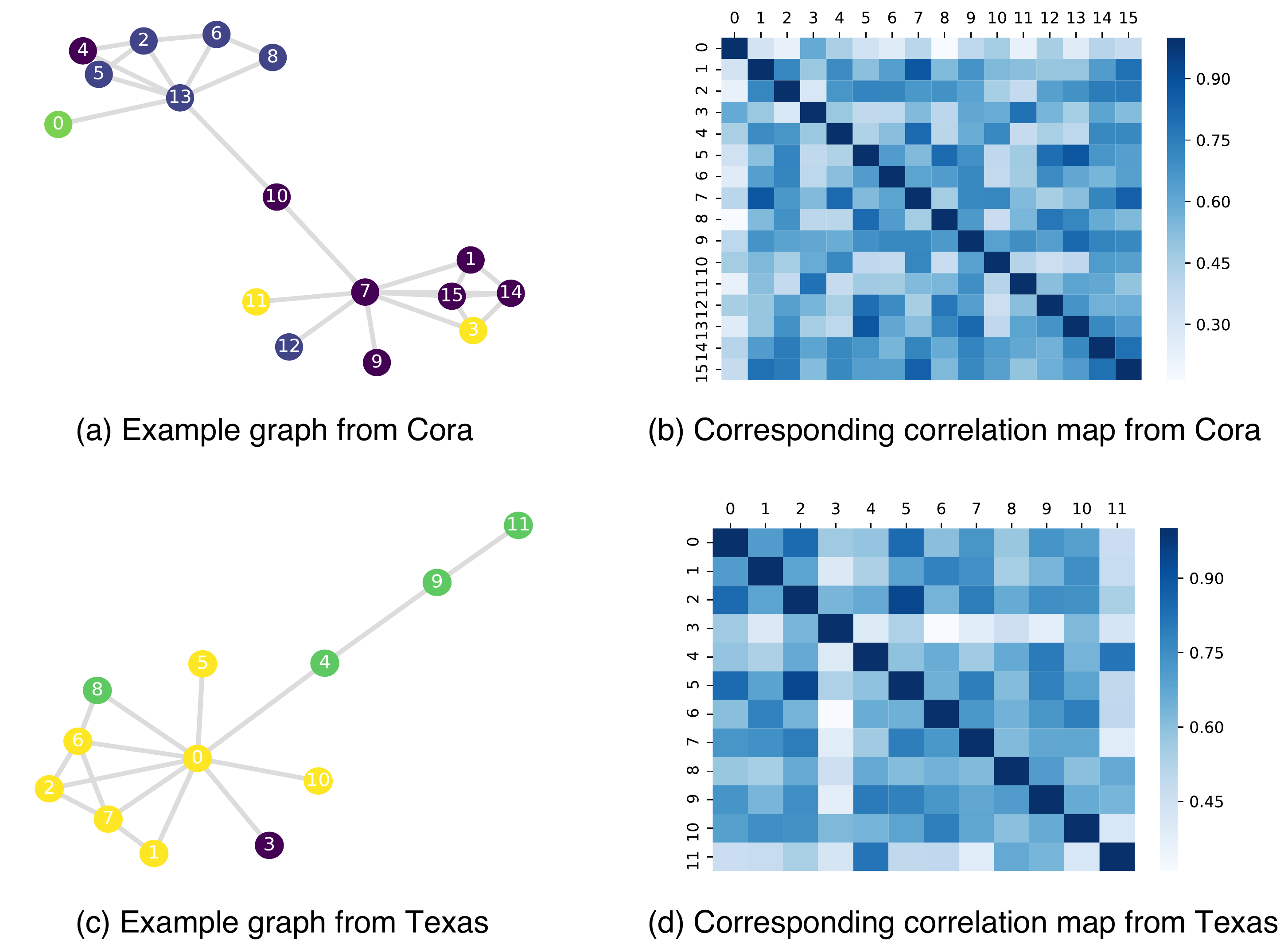}
	\vskip -0.5em
	\caption{Example graphs and correlation maps. The graphs are randomly sampled
from Cora and Texas. The correlation maps are obtained based on the pair-wise similarity of learned posteriors.}
	\label{fig:long-cora-texas}
	\vskip -1em
\end{figure}

% \begin{figure}[t]
% 	\centering
% 	\includegraphics[width=0.95\linewidth]{2109.08128/long-texas-cornell.pdf}
% 	\vskip -0.5em
% 	\caption{{Example graph and correlation map. The graph is randomly sampled from the Texas and Cornell. The correlation map is obtained based on pair-wise similarity of learned posterior over latent variable $k$.}}
% 	\label{fig:long-texas-cornell}
% 	\vskip -1em
% \end{figure}

\subsection{Additional Results on Visualization and Case Study}
\label{app:vis-cs}
To further examine the learned latent factors, in Figure~\ref{fig:appendix_visu}, we provide the additional case study and visualization results on other datasets. We can find that, in most cases, the latent factors share a similar distribution for the same type of link, which illustrates its effectiveness in decoupling the
underlying latent semantic meaning of different neighbors. This also interprets the reason why capturing the semantic shift can improve performance. Figure~\ref{fig:long-cora-texas} shows the results of long-range nodes on Cora and Texas datasets. We can also find that some nodes exhibit similar semantic clusters, regardless of the distance between them. These additional qualitative visualization results, along with our other experiment results, suggests that considering the semantic shift between local neighbors and capturing global semantics may be necessary for non-homophilous graphs.

\section{Discussion and Broader impact}
\label{app:impact}
In this paper, we proposed a new framework called decoupled self-supervised learning (DSSL) for unsupervised node representation learning on non-homophilous graphs. The theoretical analysis and extensive experiments show the effectiveness of our proposed DSSL. 

\textbf{Limitation of the work}. 
Despite the theoretical grounds and the promising experimental justifications, there is one limitation of the current work, which we hope to improve in future work: our theoretic analysis requires some assumptions on the relationship between the downstream labels and the neighborhood distribution. While, not surprisingly, we have to make some assumptions to expect good generalization for the unsupervised node representation learning, it is an interesting future direction to explore more relaxed assumptions than the ones used in this work.

\textbf{Potential negative societal impacts}. 
In this work, we propose a self-supervised learning framework for node representation learning which does not rely on annotated labels, which might reduce the need for label annotation and thus makes a few individuals who focus on labeling or annotating data unemployed. In addition, our learned representation may suffer from malicious adversaries who seek to extract node-level information about sensitive attributes. Thus, improving the robustness of DSSL is an interesting future work direction.

\end{document}